\documentclass[letterpaper, 10 pt, journal, twoside]{ieeetran}
\usepackage{times}

\usepackage{multicol}

\newtheorem{theorem}{Theorem}[section]

\newtheorem{prop}[theorem]{Proposition}

\newtheorem{problem}{Problem}

\newtheorem{rem}[theorem]{Remark}

\usepackage{hyperref}
\usepackage{cite}

\usepackage{algorithm}
\usepackage[noend]{algpseudocode}
\usepackage{multirow}
\usepackage{color}
\usepackage{amsfonts}
\usepackage{mysymbol}

\usepackage{bbold}
\usepackage{amsmath}

\usepackage{amssymb}

\usepackage{mathtools}
\DeclarePairedDelimiter{\ceil}{\lceil}{\rceil}


\IEEEoverridecommandlockouts                

\overrideIEEEmargins  

\begin{document}
\title{\textcolor{black}{Probabilistically Correct Language-based Multi-Robot Planning using Conformal Prediction}}
\author{Jun Wang, Guocheng He, and Yiannis Kantaros
\thanks{The authors are with Department of Electrical and Systems Engineering, Washington University in St Louis, MO, 63108, USA. Emails: $\left\{\text{junw, guocheng,ioannisk}\right\}$@wustl.edu.  
This work was supported by the ARL grant DCIST CRA W911NF-17-2-0181 and the NSF awards CNS $\#2231257$, and CCF $\#2403758$.
}}

\maketitle

\begin{abstract}
This paper addresses task planning problems for language-instructed robot teams. Tasks are expressed in natural language (NL), requiring the robots to apply their capabilities at various locations and semantic objects.  Several recent works have addressed similar planning problems by leveraging pre-trained Large Language Models (LLMs) to design effective multi-robot plans. However, these approaches lack \textcolor{black}{performance} guarantees. To address this challenge, we introduce a new \textcolor{black}{distributed} LLM-based planner, called S-ATLAS for Safe plAnning for Teams of Language-instructed AgentS, that is capable of achieving user-defined mission success rates. This is accomplished by leveraging conformal prediction (CP), a distribution-free uncertainty quantification tool in black-box models. CP allows the proposed multi-robot planner to reason about its inherent uncertainty in a \textcolor{black}{distributed} fashion, enabling robots to make individual decisions when they are sufficiently certain and seek help otherwise. We show, both theoretically and empirically, that the proposed planner can achieve user-specified task success rates, \textcolor{black}{assuming successful plan execution}, while minimizing the overall number of help requests. We provide comparative experiments against related works showing that our method is significantly more computational efficient and achieves lower help rates. The advantage of our algorithm over baselines becomes more pronounced with increasing robot team size.
\end{abstract}

\IEEEpeerreviewmaketitle

\begin{IEEEkeywords}
Multi-Robot Systems, Task and Motion Planning, AI-enabled Robotics, Planning under Uncertainty
\end{IEEEkeywords}

\section{Introduction}\label{sec:intro}


\IEEEPARstart{I}{n} recent years, the field of robotics has witnessed a paradigm shift towards the deployment of multiple robots collaborating to accomplish complex missions where effective coordination becomes paramount \cite{schlotfeldt2018anytime,kantaros2016distributed,gosrich2022coverage}. To this end, several multi-robot task planning algorithms have been proposed. These planners, given a high-level mission, can allocate tasks to robots and then design individual sequences of high-level actions to accomplish the assigned tasks \cite{elimelech2022efficient,kantaros2020stylus,turpin2014capt,pianpak2019distributed,fang2022automated,chen2024fast}. 
Execution of these action plans is achieved using low-level motion planners and controllers \cite{vasilopoulos2018reactive,karaman2011sampling,kavraki1996probabilistic}. A comprehensive survey on task and motion planning can be found in \cite{garrett2021integrated,antonyshyn2023multiple}. 
Despite these notable achievements in robot planning, a recurrent limitation in these techniques is the substantial user expertise often required for mission specification using e.g., formal languages \cite{baier2008principles} or reward functions \cite{sutton2018reinforcement}.

\textcolor{black}{Natural Language (NL) has also been explored as a more user-friendly means to specify robot missions. Early research in this field primarily focused on mapping NL to planning primitives. These early works often utilized statistical machine translation \cite{koehn2009statistical} to identify data-driven patterns for translating free-form commands into a formal language defined by a grammar \cite{tellex2011understanding,matuszek2010following,chen2011learning,wong2006learning,kollar2010toward,howard2014natural}. However, these approaches were limited to structured state spaces and simple NL commands. Motivated by the remarkable generalization abilities of pre-trained Large Language Models (LLMs) across diverse task domains \cite{achiam2023gpt,touvron2023llama,driess2023palm,xi2023rise}, there has been increasing attention on utilizing LLMs for NL-based planning. LLMs empower robots to design plans through conversational interactions to handle complex NL instructions.} Early efforts primarily focused on single-robot task planning problems \cite{singh2023progprompt, liang2023code,shah2023lm,xie2023translating,ding2023task,liu2023llm+,wu2023tidybot,zeng2022socratic,stepputtis2020language,li2022pre,huang2022inner,ruan2023tptu,ahn2022can,luo2023obtaining,joublin2023copal,dai2023optimal,yang2024text2reaction,tang2023graspgpt,rana2023sayplan,ravichandran2024spine} while recent extensions to multi-robot systems are presented in \cite{mandi2023roco,zhang2023building,talebirad2023multi,liu2023bolaa,hong2023metagpt,chen2023scalable,zhang2023controlling,chen2024solving,kannan2023smart,liu2024leveraging}. These multi-robot planners either delegate each robot to an LLM for decentralized plan construction, use LLMs as centralized planners, or explore hybrid multi-agent communication architectures. Additionally, mechanisms to detect conflicts, such as collisions in the plans, and provide feedback to the LLMs for plan revision have been integrated into these frameworks. A detailed survey can be found in \cite{pallagani2024prospects,zeng2023large,hunt2024survey}.
A major challenge with current LLM-based planners is that they typically lack mission performance and safety guarantees while they often hallucinate, i.e., they confidently generate incorrect and possibly unsafe outputs.

\begin{figure}[t] 
\centering
\includegraphics[width=\linewidth]{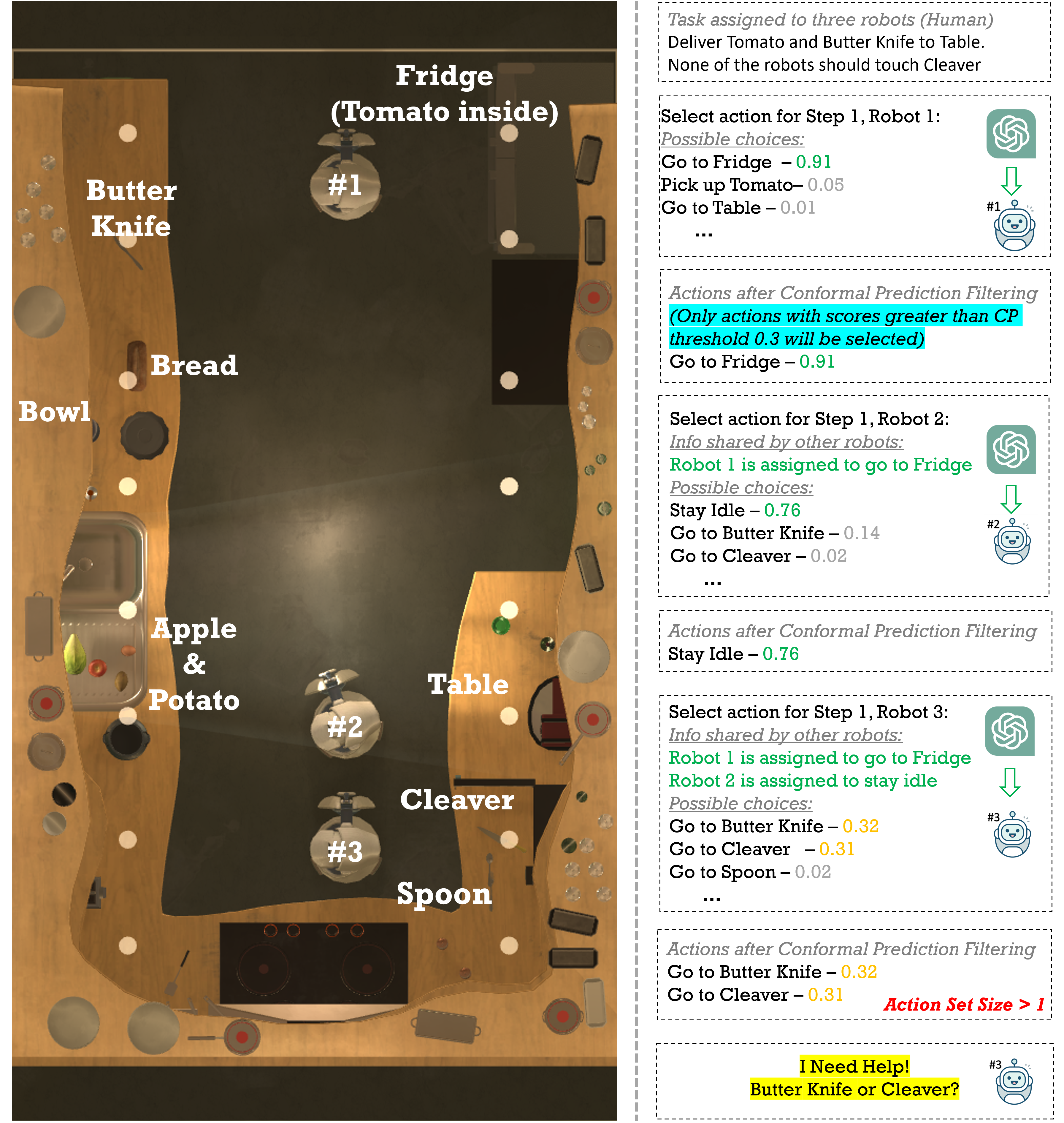}\vspace{-0.3cm}
\caption{ 
We propose a \textcolor{black}{distributed} planner for language-instructed multi-robot systems being capable of achieving user-specified mission success rates. The robots are delegated to pre-trained Large Language Models (LLMs), enabling them to select actions while coordinating in a conversational manner. Our LLM-based planner reasons about its inherent uncertainty, using conformal prediction, in a decentralized fashion. This capability allows the planner to determine when and which robot is uncertain about correctness of its next action. In cases of high uncertainty, the respective robots seek assistance.
%
} 
\label{fig:env}\vspace{-0.5cm}
\end{figure}
This paper focuses on enhancing the reliability of LLM-based multi-robot planners. We consider teams of robots possessing various skills such as mobility, manipulation, and sensing and tasked with high-level mission expressed using NL. Mission accomplishment  requires the robots to apply their skills to various known semantic objects that exist in the environment. 
Our overarching goal is to design language-based planners capable of computing multi-robot plans (defined as sequences of robot actions) \textcolor{black}{with a user-specified probability of correctness; this ensures desired mission success rates assuming successful plan execution.}  This requires developing LLM-based planners that can reason about uncertainties of the employed LLMs, enabling robots to make decisions when sufficiently confident and seek help otherwise. 

%


To address this problem, we propose a new \textcolor{black}{distributed} LLM-based planner, called S-ATLAS for Safe plAnning for Teams of Language-instructed AgentS; see also Fig. \ref{fig:env}.
\textcolor{black}{In our framework, at each time step, the robots select actions sequentially while considering actions chosen by other robots as e.g., in \cite{mandi2023roco,chen2023scalable}.} Each robot is delegated to a pre-trained LLM agent that is responsible for decision making. This coordinate-descent approach enables the \textcolor{black}{distributed} construction of robot plans.
%
%
To choose an action for a robot, we present the action selection problem to the corresponding LLM agent as a multiple-choice question-answering (MCQA) scenario \cite{ahn2022can}. Here, the `question' corresponds to the textual task description and the history of past decisions, while the set of available `choices' represents the skills that the selected robot can apply.
%
The MCQA framework ensures that, unlike in other multi-robot planners, the LLM only chooses from valid choices, mitigating (partially) the risk of hallucination where invalid plans with nonsensical actions may be generated. Instead of simply selecting the action with the highest logit score, we employ conformal prediction (CP) to quantify the uncertainty of the employed LLM \cite{balasubramanian2014conformal,angelopoulos2023conformal,shafer2008tutorial}. \textcolor{black}{This is crucial as LLMs tend to confidently generate incorrect outputs} \cite{kumar2023conformal,ren2023robots}. 
\textcolor{black}{CP enables the design of individual prediction sets for each robot that contain the correct action with high confidence.} \textcolor{black}{This allows each LLM-driven robot to determine when it is uncertain about its decisions, which is a key step toward achieving  desired mission success rates.}
%
In cases of high uncertainty, indicated by non-singleton prediction sets, the respective robots seek assistance from other robots and/or users; \textcolor{black}{otherwise, they execute the action in their singleton sets.}

\textbf{Related Works:} \textcolor{black}{As discussed earlier, several planners for language-driven robot teams have been proposed recently which, unlike the proposed method, lack performance guarantees; see e.g.,  \cite{mandi2023roco,chen2023scalable}.}
The closest works to ours are \cite{kumar2023conformal,wang2023conformal,ren2023robots}, as they apply CP for uncertainty alignment of LLMs. In \cite{kumar2023conformal}, CP is applied to single-step MCQA tasks while \cite{ren2023robots,wang2023conformal} build upon that approach to address more general NL tasks, modeled as multi-step MCQA problems, with desired success rates. Notably, the inspiring work in \cite{ren2023robots} is the first to demonstrate how CP can determine when a robot needs help from a user to efficiently resolve task ambiguities. These works, however, focus on single-robot planning problems. 
\textcolor{black}{We note that \cite{ren2023robots} can be extended to multi-robot settings by treating the multi-robot system as a single robot performing multiple actions simultaneously. However, we show that these centralized implementations do not scale well with increasing team size due to increasing computational costs and/or the diminishing ability of a single LLM to effectively manage large robot teams. Moreover, centralized implementations of \cite{ren2023robots} generate multi-robot/global prediction sets, making it harder to determine which specific robot requires assistance. Our method addresses these challenges through its distributed nature, constructing local prediction sets, for each robot, allowing users to easily identify which robot needs help and when, while also ensuring desired team-wide mission success rates. Comparisons against these centralized approaches demonstrate that our method is significantly more computationally efficient and requires fewer user interventions, particularly as team size and mission complexity grow. } 
Finally, we note that several uncertainty quantification and calibration methods for LLMs have been proposed recently that 
associate uncertainty with point-valued outputs \cite{xiao2019quantifying,zhou2023navigating,xiao2022uncertainty}. Unlike these methods, CP generates prediction sets containing the ground truth output with user-specified probability providing coverage guarantees for the underlying model. \textcolor{black}{CP has also been applied recently for various robotics tasks, such as perception \cite{angelopoulos2020uncertainty,yang2023object,mao2023safe,su2024collaborative}, motion prediction \cite{lindemann2023safe,sun2023conformal,cleaveland2023conformal}, and decision making \cite{lekeufack2023conformal,dixit2024perceive}; a comprehensive review of CP applications and implementations can be found in \cite{manokhin_valery_2022_6467205}. \textcolor{black}{To the best of our knowledge, this paper presents the first CP application in NL-based multi-robot planning, particularly in \textit{distributed} settings.}}


\textcolor{black}{\textbf{Evaluation:} We provide extensive comparisons against recent LLM-based planners \cite{chen2023scalable,ren2023robots}. First, we compare our method against centralized implementations of \cite{ren2023robots} that can also achieve desired mission completion rates using CP as discussed above. Our comparative experiments demonstrate our method is significantly more scalable and computationally efficient and results in lower help rates from users. 
} 
Second, in Appendix \ref{app:Comp}, we compare our method against centralized and decentralized planners proposed in \cite{chen2023scalable}. We note that these planners do not allow robots to ask for help. Our comparative experiments demonstrate  that our method outperforms \cite{chen2023scalable} in terms of its ability to design correct plans, even when the CP-based help-request module is omitted from S-ATLAS. This performance gap between our method and the baselines becomes more evident as the robot team size and the mission complexity increase. 

\textbf{Contribution:} The contribution of the paper can be summarized as follows. \textbf{(i)} We propose the first \textcolor{black}{\textit{distributed}} planner for language-instructed multi-robot systems that can achieve \textit{user-specified task success rates} \textcolor{black}{(assuming successful execution of robot skills)}. \textbf{(ii)} We show how conformal prediction can be applied in a distributed fashion to construct \textcolor{black}{local prediction sets for each individual robot.} 
\textbf{(iii)} \textcolor{black}{We provide  comparative experiments showing that our planner outperforms related language-based planners in terms of efficiency, planning performance, and help rates.}
\vspace{-0.1cm}
\section{Problem Formulation}\label{sec:problem}
\vspace{-0.1cm}


\textbf{Robot Team:} Consider a team of $N\geq 1$ robots. Each robot is governed by the following dynamics: $\bbp_j(t+1)=\bbf_j(\bbp_j(t),\bbu_j(t))$, $j\in\{1, 2, \dots, N\}$, where $\bbp_j(t)$ and $\bbu_j(t)$ stand for the state the control input of robot $j$ at discrete time $t$, respectively. We assume that all robot states are known for all time instants $t\geq0$.
Each robot possesses $A>0$ skills collected in a set $\ccalA\in\{1,\dots,A\}$. These skills $a\in\ccalA$ are represented textually (e.g., `take a picture', `grab', or `go to'). The application of skill $a$ by robot $j$ to an object/region at location $\mathbf{x}$ at time $t$ is represented as $s_j(a, \mathbf{x}, t)$. For simplicity, we assume homogeneity among the robots, meaning they all share the same skill set $\mathcal{A}$ and can  apply skill $a$ at any given location $\mathbf{x}$ associated with an object/region; later we discuss how this assumption can be relaxed. When it is clear from the context, we denote $s_j(a,\bbx,t)$ by $s_j(t)$ for brevity. We also define the multi-robot action at time $t$ as $\bbs(t)=[s_1(t),\dots,s_N(t)]$. The time step $t$ is increased by one, once $\bbs$ is completed/executed.
Also, we assume that every robot has access to low-level controllers to execute the skills in $\ccalA$. We assume error-free execution of these skills \cite{ren2023robots}.\footnote{This assumption will be used in Theorem \ref{thm1}; see also Section \ref{sec:concl}.}.

\textbf{Environment:} The robot team resides in an environment $\Omega \subseteq \mathbb{R}^d, d \in \{2,3\}$, with obstacle-free space $\Omega_{\text{free}} \subseteq \Omega$. We assume that $\Omega_{\text{free}}$ contains $M>0$ semantic objects. \textcolor{black}{Each object $e$ is defined by its location $\mathbf{x}_e$ and a semantic label $o_e$.}
%
%
Using the action space $\ccalA$ and the available semantic objects, we can construct a set $\mathcal{S}$ that collects all possible decisions $s_j$ that a robot can make. Notice that since we consider homogeneous robots, the set $\ccalS$ is the same for all robots $j$.
We assume that both the obstacle-free space and the locations/labels of all objects are a-priori known and static; in Section \ref{sec:planner}, we discuss how this assumption can be relaxed.
%
%

\textbf{Mission Specification:} The team is assigned a high-level coordinated task $\phi$, expressed in natural language (NL), that is defined over \textcolor{black}{the objects or regions of interest in $\Omega$.} 
Note that $\phi$ may comprise multiple sub-tasks that are not necessarily pre-assigned to specific robots. Thus, achieving $\phi$ involves determining which actions $a$ each robot $j$ should apply, when, where, and in what order. Our goal is to design multi-robot mission plans that accomplish $\phi$, defined as $\tau = {\bbs(1), \bbs(2), \dots, \bbs(H)}$, for some horizon $H > 0$.
We formalize this by considering a distribution $\ccalD$ over scenarios $\xi_i=\{N_i, \ccalA_i, \phi_i, H_i, \Omega_i\}$, with corresponding ground truth plans $\tau_i$.\footnote{To account for heterogeneous robots, more complex distributions $\ccalD$ can be defined generating individual action sets. The proposed algorithm remains applicable. We abstain from this presentation for simplicity.} Recall that $N_i, \ccalA_i, \phi_i, H_i, \text{~and~} \Omega_i$ refer to the number of homogeneous robots, the robot skills, the language-based task, the mission horizon, and the semantic environment, respectively, associated with $\xi_i$. The horizon $H_i$ essentially determines an upper bound on the number of steps $t$ required to accomplish $\phi_i$. The subscript $i$ is used to emphasize that these parameters can vary across scenarios. We assume that all scenarios $\xi_i$ drawn from $\ccalD$ are feasible in the sense that $H_i$ is large enough and all objects of interest are accessible to the robots. When it is clear from the context, we drop the dependence on $i$. Note that $\ccalD$ is unknown but we assume that we can sample i.i.d. scenarios from it.
\textcolor{black}{Our goal is to design \textcolor{black}{probabilistically correct} language-based planners, i.e., planners that can generate \textcolor{black}{correct/feasible} plans $\tau_i$ satisfying $\xi_i$, in at least $(1-\alpha)\cdot 100$\% of scenarios $\xi_i$ drawn from $\ccalD$, for a user-specified $\alpha\in (0,1)$.}\footnote{\textcolor{black}{We note that probabilistic task satisfaction arises solely due to imperfections of LLMs employed to process NL. Our future work will consider imperfections of robot skills as well.}} The problem that this paper addresses can be summarized as follows:

\begin{problem}\label{pr:pr1}
    \textcolor{black}{Design a planner that generates multi-robot plans $\tau$, which complete missions $\xi$, drawn from an unknown distribution $\ccalD$, with probability greater than a user-specified threshold $1-\alpha\in(0,1)$, assuming successful execution of $\tau$. }
\end{problem}

\section{\textcolor{black}{Probabilistically Correct Language-based Multi-robot Planning}}\label{sec:planner}

\begin{algorithm}[t]
\caption{S-ATLAS}\label{alg:MultiRobotLLM}
\begin{algorithmic}[1]
\State \textbf{Input}: Scenario $\xi$; Coverage level $\alpha$; Upper bound $W$
\State Initialize an ordered set $\ccalI$;\label{algo1:order_init}
\State Initialize prompt $\ell_{\ccalI(1)}(1)$ with empty history of actions\;\label{algo1:prompt_init} 
\For{$t=1~\text{to}~H$}
\State Initialize help counter $w=0$\;\label{algo1:counter1}
\State $\bbs(t)=[\varnothing,\dots,\varnothing]$\label{algo1:nullAction}\;
\For{$i\in\ccalI$}\label{algo1:for1}
\State Next robot $j=\ccalI(i)$\label{algo1:nextRbt}
\State Compute $\ell_j(t)$ using $\ell_{j-1}(t)$\;\label{algo1:constrPrompt}
\State Compute the local prediction set $\mathcal{C}(\ell_j(t))$ 
as in \eqref{eq:pred3local}\label{algo1:pred_set}
\If{ $|\mathcal{C}(\ell_j(t))|>1$ }\label{algo1:HelpTriggered}
\State $w=w+1$\;\label{algo1:helpCounterIncr}

\If{$w>W$}\label{algo1:HelpHuman1}
\State Obtain $s_j(t)$ from human operator\;  \label{algo1:HelpHuman2}
\Else
\State Get new $\ccalI'$, set $\ccalI=\ccalI'$, and go to line \ref{algo1:nullAction}\;\label{algo1:helpteam}
\EndIf
\Else
\State  Pick the (unique) decision $s_j(t)\in\mathcal{C}(\ell_j(t))$\label{algo1:pickAction}
\EndIf
\State Update $\ell_{j}(t)=\ell_{j}(t)+s_j(t)$ \label{algo1:prompt_update}
\State Send $\ell_j(t)$ to robot $\ccalI(i+1)$\label{algo1:sendInfo}
\EndFor
\State Construct $\bbs(t) = [s_1(t),\dots,s_N(t)]$\label{algo1:s}
\State Append $\bbs(t)$ to the multi-robot plan $\tau$\label{algo1:tau}
\EndFor
\end{algorithmic}
\end{algorithm}
\normalsize

In this section, we propose a new task planner that harnesses pre-trained LLMs to address Problem \ref{pr:pr1}. In Section
\ref{sec:centralized}, \textcolor{black}{we present a centralized planner, as a potential solution to Problem \ref{pr:pr1}, building upon \cite{ahn2022can,ren2023robots}, which, however, cannot effectively scale to multi-robot settings}. To address this challenge, in Sections \ref{sec:coorddescent}-\ref{sec:cp}, we present S-ATLAS, our proposed \textcolor{black}{distributed} planner; see Alg. \ref{alg:MultiRobotLLM} and Fig. \ref{fig:cd_framework}.

\vspace{-0.1cm}
\subsection{A Centralized Planning Framework using LLMs}\label{sec:centralized}
\vspace{-0.1cm}

\textcolor{black}{In this section, building on the single-robot planners from \cite{ahn2022can,ren2023robots}, we present a centralized multi-robot planner that can address Problem \ref{pr:pr1}. In \cite{ahn2022can}, the task planning problem is modeled as a sequence of MCQA scenarios over time steps $t \in \{1,\dots,H\}$, where the question represents the task $\phi$ and mission progress, and the choices represent decisions $s_j(t) \in \ccalS$ that (the single) robot $j$ can make. This approach can be extended to multi-robot systems by treating the team as a single robot executing $N$ actions simultaneously, leading to $S^N$ choices in the MCQA framework. Based on \cite{ahn2022can}, the action with the highest LLM confidence score is selected; assuming perfect robot skills (i.e., affordance functions that always return 1). However, these scores are uncertainty-agnostic and uncalibrated. Therefore, this approach cannot effectively solve Problem \ref{pr:pr1} \cite{ren2023robots}. Instead, we apply the conformal prediction (CP) approach from \cite{ren2023robots} allowing the team to make decisions when sufficiently certain and seek help from users otherwise. As shown in \cite{ren2023robots}, this centralized approach can achieve desired mission success rates. However, application of CP in this setting becomes computationally intractable as $N$ increases due to the exponential growth of the number $S^N$ of options that need to be considered; see Section \ref{sec:knownoComp}.}

\begin{rem}[Enhancing Computational Efficiency]\label{rem:baseline2}
    \textcolor{black}{A more efficient centralized approach would be to consider only $X\ll S^N$ options (i.e., multi-robot decisions) in every MCQA scenario, that are generated on-the-fly by an LLM as proposed in \cite{ren2023robots}. As in \cite{ren2023robots}, these options should be augmented with an additional choice `other option, not listed' to account for cases where all LLM options are invalid. A challenge in this case is that as $N$ increases, LLMs struggle to provide valid options, often leading to `incomplete' plans ending with 'other option, not listed'; see Section \ref{sec:knownoComp}.}
\end{rem}

\begin{figure}[t] 
\centering
\includegraphics[width=1\linewidth]{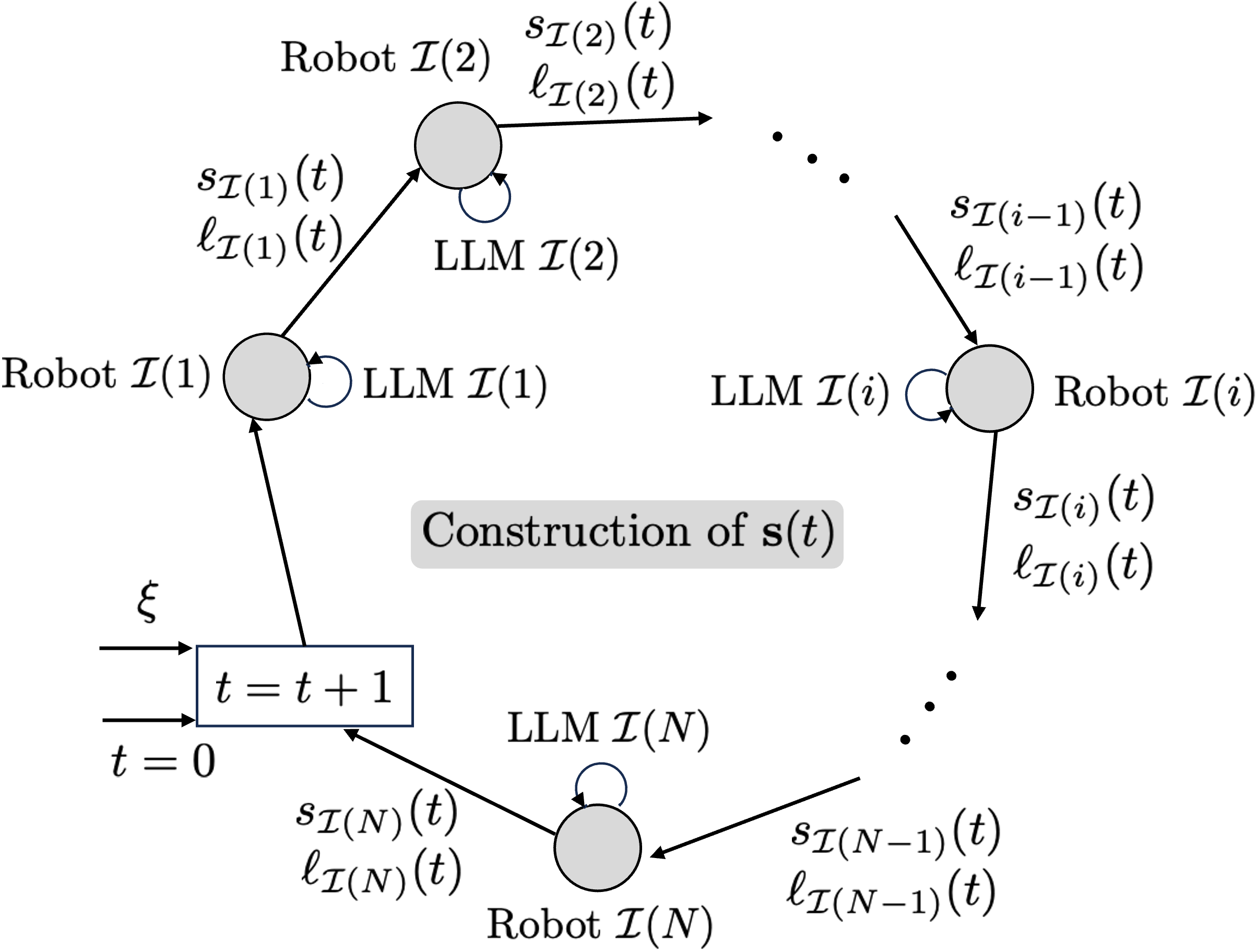} \vspace{-0.5cm}
\caption{Given a task scenario $\xi$, the robots select decisions at each time $t$ sequentially as per an ordered set $\ccalI$ of robot indices. Each robot $j=\ccalI(i)$ (gray disk) is delegated to an LLM that generates a decision $s_j(t)$ given textual context $\ell_{\ccalI(i-1)}(t)$ containing the task description and past robot decisions provided by the previous robot $\ccalI(i-1)$. If the LLM for robot $j$ is not certain enough about what the correct decision $s_j(t)$ is, the robots seek assistance (not shown). } \vspace{-0.5cm}
\label{fig:cd_framework}
\end{figure}

\vspace{-0.1cm}
\subsection{\textcolor{black}{Distributed} Multi-Robot Planning using LLMs}\label{sec:coorddescent}
\vspace{-0.1cm}
In this section, we propose S-ATLAS, \textcolor{black}{a distributed planning framework to address Problem \ref{pr:pr1} and overcome the limitations of the centralized approaches discussed in Section \ref{sec:centralized}, making it more effective for larger-scale planning problems.} 
The key idea is that, given a scenario $\xi$ drawn from $\ccalD$, the robots pick decisions at time $t$ successively (as opposed to jointly in Section \ref{sec:centralized}). Each robot $j$ is delegated to an LLM that picks a decision $s_j(t)$, using an MCQA setup, while incorporating decisions of past robot decisions. If the LLM for robot $j$ is not certain enough about what the correct $s_j(t)$ is, it seeks assistance from other robots and/or human operators. This process gives rise to the multi-robot decision $\bbs(t)$ at time $t$. 
The above is repeated to construct $\bbs(t+1),\dots,\bbs(H)$. Next, we describe our proposed \textcolor{black}{distributed} planning algorithm in more detail.

Consider a scenario $\xi$ drawn from $\ccalD$ and an ordered set $\ccalI$, initialized by a user, collecting robot indices $j\in\{1,\dots,N\}$ so that each robot index appears exactly once; e.g., $\ccalI=\{1,\dots,N\}$ [line \ref{algo1:order_init}, Alg. \ref{alg:MultiRobotLLM}]. We denote the $i-$th element in $\ccalI$ by $\ccalI(i)$. At each $t\in \{1\dots H \}$, the robots pick actions sequentially as per $\ccalI$ [lines \ref{algo1:prompt_init}-\ref{algo1:tau}, Alg. \ref{alg:MultiRobotLLM}]; see also Fig. \ref{fig:cd_framework}. Specifically, robot $j=\ccalI(i)$ selects $s_j(t)$ while considering, the NL task description as well as all decisions that have been made until $t-1$ and all decisions made by the robots $\ccalI(1),\dots,\ccalI(i-1)$ at time $t$. This information, denoted by $\ell_j(t)$, is described textually. Given the context $\ell_j(t)$, the LLM assigned to robot $j$ generates $s_j(t)$. Hereafter, for simplicity of notation, we assume that $\ccalI=\{1,\dots,N\}$ so that $\ccalI(j)=j$. 
We also assume that all agents share the same LLM model; this assumption is made only for ease of notation as it will be discussed later. Next, we first describe how $\ell_j(t)$ is structured and then we discuss how $s_j(t)$ is computed given $\ell_j(t)$. 


\textbf{Prompt Construction:} 
The prompt $\ell_j(t)$ consists of the following parts (see also Fig. \ref{fig:prompt}): (a) \textit{System description} that defines the action space $\ccalA$ that the robot can apply and the objective of the LLM. 
%
%
%
%
(b) \textit{Environment description} that describes the semantic objects that exist in the environment;
(c) \textit{Task description} of the mission $\phi$; 
\textcolor{black}{(d)} \textit{Response structure} describing the desired structure of the LLM output for an example task.
\textcolor{black}{(e)} \textit{History of actions} that includes the sequence of decisions made by all robots up to time $t-1$ and for all robots $1,\dots,j-1$ up to time $t$.
(f) \textit{Current time step and robot index} describing the current time step $t$ and the index $j$ to the robot that is now responsible for picking an action.

\begin{figure}[t] 
\centering
\includegraphics[width=\linewidth]{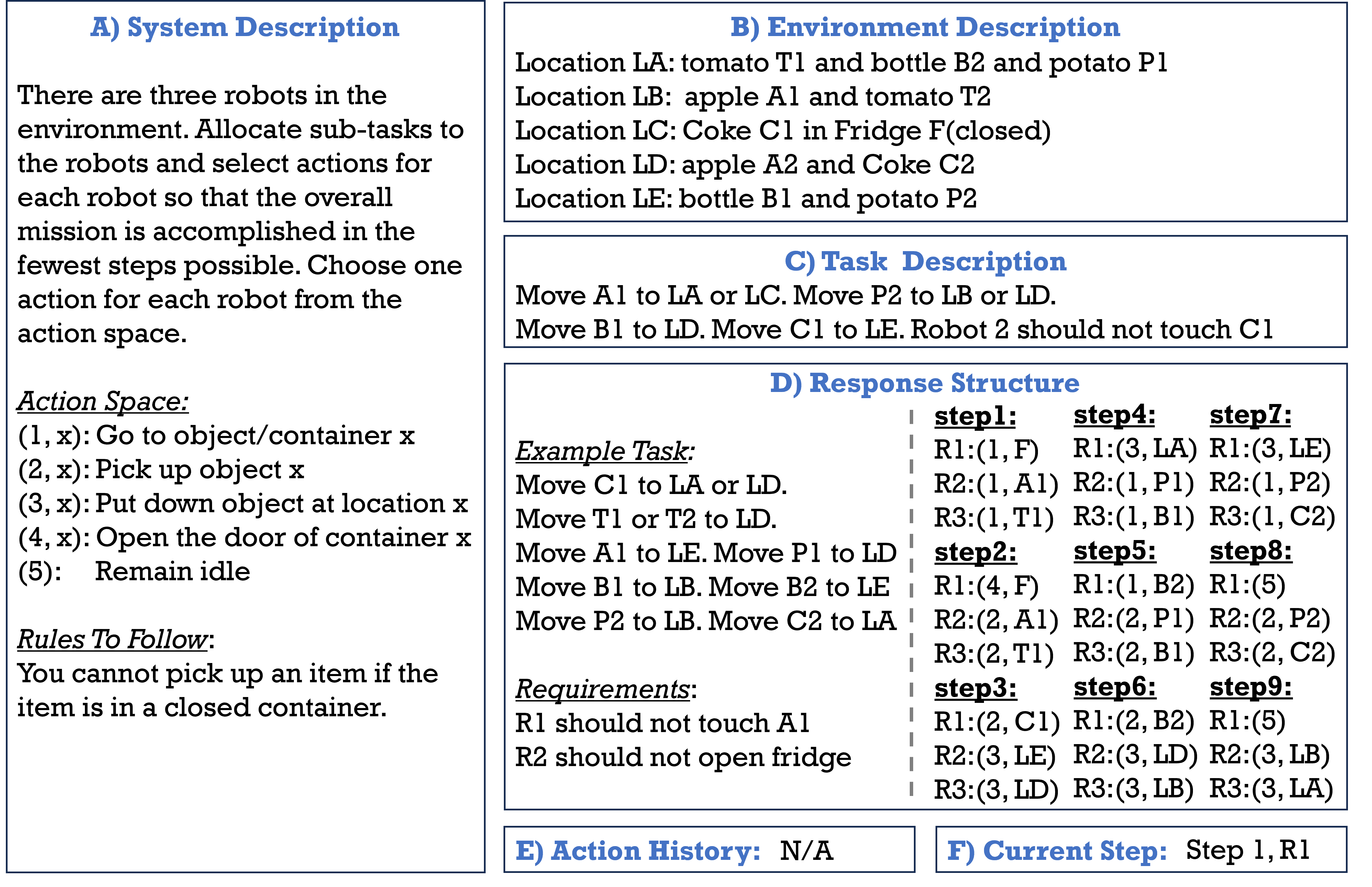} \vspace{-0.5cm}
\caption{\textcolor{black}{Example of the constructed prompt in Section \ref{sec:experiments} \textcolor{black}{for GPT 3.5. } This prompt refers to $t=1$ when the action history is empty.} 
} 
\label{fig:prompt}\vspace{-0.5cm}
\end{figure}

\textbf{Plan Design:} 
Assume that at time $t$, it is robot's $j$ turn to select $s_j(t)$ as per the order captured in $\ccalI$. First, robot $j$ constructs $\ell_j(t)$ using the prompt $\ell_{j-1}(t)$ that the previous robot $j-1$ constructed (or $\ell_{N}(t-1)$, if $j=1$) [line \ref{algo1:nextRbt}-\ref{algo1:constrPrompt}, Alg. \ref{alg:MultiRobotLLM}]. Parts (a)-(c) in $\ell_j(t)$ are the same for all $j$ and $t\geq 1$ (since the environment is static and known); \textcolor{black}{part (d) is the same for all $j$ and $t\geq 1$; part (e) includes the history of actions and can be found in part (e) of $\ell_{j-1}(t)$; and part (f) is constructed as discussed above.}
%
%
The initial prompt $\ell_1(1)$ is manually constructed given a scenario $\xi$ with part (d) being empty [line \ref{algo1:prompt_init}, Alg. \ref{alg:MultiRobotLLM}].

Given $\ell_j(t)$, selection of $s_j(t)$ is represented as an MCQA problem that is solved by the LLM for robot $j$. Specifically, given $\ell_j(t)$ (`question' in MCQA), the LLM $j$ has to pick decision $s_j(t)$ among all available ones included in part (a) (`choices' in MCQA). Given any $s_j(t)\in \mathcal{S}$, LLMs can provide a confidence score $g(s_j(t)|\ell_j(t))$; the higher the score, the more likely the decision $s_j(t)$ is a valid next step to positively progress the task \cite{ahn2022can}.\footnote{If the robots do not share the same LLM model, then $g(s_j(t)|\ell_j(t))$ should be replaced by $g_j(s_j(t)|\ell_j(t))$ throughout the paper.} To get the scores $g(s_j(t)|\ell_j(t))$ for all $s_j(t)\in \mathcal{S}$, we query the model over all potential decisions $s_j$ (i.e., $S=|\ccalS|$ times in total). 
%
Using these scores, a possible approach to select the $s_j(t)$ is by simply choosing the decision with the highest score, i.e., $s_j(t)=\arg\max_{s_j\in\mathcal{S}} g(s_j|\ell_j(t)).$
However, this point-prediction approach is uncertainty-agnostic; note that these scores do not represent calibrated confidence \cite{ren2023robots}. A much preferred approach would be to generate a set of actions (called, hereafter, \textit{prediction set}), denoted by $\ccalC(\ell_j(t))$, that contains the ground truth action with a user-specified high-probability [line \ref{algo1:pred_set}, Alg. \ref{alg:MultiRobotLLM}]. Hereafter, we assume that such prediction sets are provided. We defer their construction to Section \ref{sec:cp} (see \eqref{eq:pred3local}) but we emphasize that these sets are critical to achieving desired $1-\alpha$ mission completion rates.

Given $\ccalC(\ell_j(t))$, we select decisions $s_j(t)$ as follows. If $\ccalC(\ell_j(t))$ is singleton, then we select the action included in $\ccalC(\ell_j(t))$ as it contributes to mission progress with high probability [line \ref{algo1:pickAction}, Alg. \ref{alg:MultiRobotLLM}].\footnote{By construction of the prediction sets, this action coincides with $s_j(t)=\arg\max_{s_j\in\mathcal{S}} g(s_j|\ell_j(t))$; see Section \ref{sec:cp}.} 
Otherwise, robot $j$ seeks assistance from its teammates asking all robots to select new decisions for time $t$ as a per a different order $\ccalI$. If this does not result in singleton prediction sets, robot $j$ asks a human operator to select $s_j(t)$ [lines \ref{algo1:HelpTriggered}-\ref{algo1:HelpHuman2}, Alg. \ref{alg:MultiRobotLLM}]. The process of seeking assistance is discussed in more detail in Section \ref{sec:help}.
%
Once $s_j(t)$ is selected, robot $j$ \textcolor{black}{records $s_j(t)$ into part (e)}.  
With slight abuse of notation, we denote this prompt update by $\ell_{j+1}(t) = \ell_{j}(t) + s_j(t),$ where the summation means concatenation of text [line \ref{algo1:prompt_update}, Alg. \ref{alg:MultiRobotLLM}]. 

Then, robot $j$ sends the resulting prompt  $\ell_j(t)$ to the next robot $j+1$ [line \ref{algo1:sendInfo}, Alg. \ref{alg:MultiRobotLLM}]
We repeat the above procedure sequentially over all robots $j$ as per $\ccalI$. This way, we construct the multi-robot action $\bbs(t)$ [lines \ref{algo1:s}-\ref{algo1:tau}, Alg. \ref{alg:MultiRobotLLM}].
Once $\bbs(t)$ is constructed the current time step is updated to $t+1$. 
The above process is repeated to design $\bbs(t+1)$. 
This way, we generate a plan $\tau=\bbs(1),\dots,\bbs(t),\dots,\bbs(H)$ of horizon $H$. The robots execute the plan $\tau$ once it is fully constructed. We assume that $\tau$ is executed synchronously across the robots; this is a usual assumption in collaborative multi-robot task planning \cite{schillinger2018decomposition,kantaros2020stylus}.

\begin{rem}[Unknown/Dynamic Environments]\label{rem:unknownEnv}
    To relax the assumption of known/static environments, at every time step $t$, the multi-robot action $\bbs(t)$ should be executed as soon as it is computed.
    This execution requires the robots to possess sensor-based controllers $\mathbf{u}_j$ enabling them to implement decisions from $\mathcal{S}$ in unknown and dynamic environments. Once executed, part (b) in the prompt should be updated as per sensor feedback to account e.g., for new or any missing objects. Our future work will focus
on formally relaxing this assumption.
    %
\end{rem}

\vspace{-0.1cm}
\subsection{Seeking Assistance from Teammates and Human Operators}
\label{sec:help}
\vspace{-0.1cm}
As discussed in Section \ref{sec:coorddescent}, \textcolor{black}{if $\mathcal{C}(\ell_j(t))$} is singleton, then this means that the LLM is sufficiently certain that the decision \textcolor{black}{$s_j(t)\in\mathcal{C}(\ell_j(t))$} included in $\mathcal{C}(\ell_j(t)))$ will contribute to making mission progress. Otherwise, the corresponding robot $j$ should seek assistance in the following two ways [lines \ref{algo1:HelpTriggered}-\ref{algo1:HelpHuman2}, Alg. \ref{alg:MultiRobotLLM}]. First, robot $j$ generates a new order $\ccalI$ which is forwarded to all robots along with a message to redesign $\bbs(t)$ from scratch, as per the new ordered set $\ccalI$ [lines \ref{algo1:HelpTriggered}-\ref{algo1:HelpHuman1}, Alg. \ref{alg:MultiRobotLLM}].\footnote{We note that this does not violate the coverage guarantees in Section \ref{sec:cp}, as the distribution is assumed to generate sequences where the order of robots changes over time steps $t$ (within the same sequence); see Section \ref{sec:cp}. 
}
We iterate over various ordered sets $\ccalI$ until singleton prediction sets are generated for all robots. 
%
%
Interestingly, our empirical analysis has shown that $\ccalI$ can potentially affect the size of prediction sets even though this change affects primarily only part (f) in the constructed prompts; see Section \ref{sec:experiments}.
\textcolor{black}{We remark that this step can significantly increase computational complexity of our method, and, therefore, it may be neglected for large robot teams and sets $\ccalS$.} 
The second type of assistance is pursued when a suitable set $\ccalI$ cannot be found with the next $W$ attempts, for some user-specified $W\geq 0$. 
In these cases, the robot $j$ requests help from human operators [lines \ref{algo1:HelpHuman1}-\ref{algo1:HelpHuman2}, Alg. \ref{alg:MultiRobotLLM}]. Particularly, robot $j$ presents to a user the set $\mathcal{C}(\ell_j(t)))$ (along with the prompt $\ell_j(t)$) and asks them to choose one action from it. The user either picks a feasible action from the prediction set for the corresponding robot or decides to halt the operation. \textcolor{black}{In case the prediction set contains multiple feasible solutions, the user selects the one with the highest LLM confidence score; see also Remark \ref{rem:multiFeas2}}. 

\subsection{Constructing Local Prediction Sets using CP}\label{sec:cp}

In this section, we discuss how the prediction sets $\ccalC(\ell_j(t))$, introduced in Section \ref{sec:coorddescent}, are constructed, given a required task success rate $1-\alpha$ (see Problem \ref{pr:pr1}). 
%
The MCQA setup allows us to apply conformal prediction (CP) to construct these sets. 
To illustrate the challenges of their construction, we consider the following cases: (i) single-robot \& single-step plans 
and (ii) multi-robot \& multi-step plans.

\textbf{Single-robot \& Single-step Plans:} Initially, we focus only scenarios with $N=1$ and $H=1$. 
%
First, we sample $M$ independent scenarios from $\ccalD$. We refer to these scenarios as calibration scenarios. For each calibration scenario $i\in\{1,\dots,M\}$, we construct its equivalent prompt $\ell_{j,\text{calib}}^i$ associated with (the single) robot $j$. For each prompt, we (manually) compute the ground truth plan $\tau_{\text{calib}}^{i}=s_{j,\text{calib}}^{i}(1)$ accomplishing this task.
\textcolor{black}{For simplicity, we assume that there exists a unique correct decision $s_{j,\text{calib}}^{i}$ for each $\ell_{j,\text{calib}}^i$; this assumption is relaxed in Remark \ref{rem:multipleFeas1}.}
%
Hereafter, we drop the dependence of the robot decisions and prompts on the time step, since we consider single-step plans. This way we construct a calibration dataset $\ccalM=\{(\ell_{j,\text{calib}}^i,\tau_{\text{calib}}^{i})\}_{i=1}^M$. 

Consider a new scenario drawn from $\ccalD$, called validation/test scenario. We convert this scenario into its equivalent prompt $\ell_{j,\text{test}}$. 
Since the calibration and the validation scenario are i.i.d., 
%
CP can generate a prediction set $\ccalC(\ell_{j,\text{test}})$ of decisions $s_j$ containing the correct one, denoted by $s_{j,\text{test}}$, with probability greater than $1-\alpha$, i.e.,  
\begin{equation}\label{eq:CP1}
P(s_{j,\text{test}}\in \mathcal{C}(\ell_{j,\text{test}})) \geq 1-\alpha,
\end{equation}
where $\textcolor{black}{1-\alpha\in(0,1)}$ is user-specified.
To generate $\mathcal{C}(\ell_{j,\text{test}})$, CP first uses the LLM’s confidence $g$ to compute the set of
non-conformity scores (NCS) $\{r^i=1-g(s_{j,\text{calib}}^i~|~\ell_{j,\text{calib}}^i)\}_{i=1}^M$ over the calibration set.
The higher the score is, the less each calibration point conforms to the data used for training the LLM. Then we perform calibration by computing the $\frac{(M+1)(1-\alpha)}{M}$ empirical quantile of $r_1,\dots,r_M$ denoted by $q$. Using $q$, CP constructs the prediction set 
\begin{equation}\label{eq:pred1}
\mathcal{C}(\ell_{j,\text{test}})=\{s_j\in \ccalS~|~g(s_j|\ell_{j,\text{test}})>1-q\},
\end{equation}
that includes all decisions that the model is at least $1-q$ confident in. This prediction set is used to select decisions as per Section \ref{sec:coorddescent} for single-robot single-step tasks. 
%
By construction of the prediction sets the decision $s_j=\arg\max_{s_j\in\mathcal{S}} g(s_j|\ell_{j,\text{test}})$ belongs to $\mathcal{C}(\ell_{j,\text{test}})$. 

\textbf{Multi-robot \& Multi-step Plans:} Next, we generalize the above result to the case where $N\geq1$ and $H\geq 1$.
%
Here we cannot apply directly \eqref{eq:CP1}-\eqref{eq:pred1} to compute individual sets $\ccalC(\ell_{j,\text{test}}(t))$ for each robot, as this violates the i.i.d. assumption required to apply CP. 
Particularly, the distribution of prompts $\ell_{j,\text{test}}(t)$ depends on the previous prompts i.e., the multi-robot decisions at $t'<t$ as well as the decisions of robots $1,\dots,j-1$ at time $t$. 
Inspired by \cite{ren2023robots}, we address this challenge by (i) lifting the data to sequences, and (ii) performing calibration at the sequence level using a carefully designed NCS. 
%
%

We sample $M\geq1$ independent calibration scenarios from $\ccalD$. Each scenario corresponds to various tasks $\phi_i$, numbers of robots $N_i$, and mission horizons $H_i$. Each task $\phi_i$ is broken into a sequence of $T_i=H_i\cdot N_i\geq 1$ prompts defined as in Section \ref{sec:coorddescent}. 
This sequence of $T_i$ prompts is denoted by
\begin{align}\label{eq:seqProm}
&\bar{\ell}_{\text{calib}}^i=\underbrace{\ell_{1,\text{calib}}^i(1),\dots,\ell_{j,\text{calib}}^i(1),\dots,\ell_{N_i,\text{calib}}^i(1)}_{t=1},\dots,\nonumber\\&
\underbrace{\ell_{1,\text{calib}}^i(t'),\dots,\ell_{j,\text{calib}}^i(t'),\dots,\ell_{N_i,\text{calib}}^i(t')}_{t=t'},\dots,\nonumber\\&
\underbrace{\ell_{1,\text{calib}}^i(H_i),\dots,\ell_{j,\text{calib}}^i(H_i),\dots,\ell_{N_i,\text{calib}}^i(H_i)}_{t=H_i}.
\end{align}
where, by construction, each prompt $\bar{\ell}_{\text{calib}}^i$ contains a history of ground truth decisions made so far. We define the corresponding sequence of ground truth decisions as:\footnote{
%
%
The distribution $\ccalD$ induces a distribution over data sequences \eqref{eq:seqProm} \cite{ren2023robots}. These data sequences are equivalent representations of the sampled scenarios augmented with the correct decisions.
%
Observe that in these sequences the order of robots is determined by an ordered set $\ccalI_i$ (generated by the induced distribution). 
For ease of notation, in this section, we assume that the ordered set is defined as $\ccalI_i=\{1,\dots,N_i\}$ and $\ccalI_{\text{test}}=\{1,\dots,N_\text{test}\}$ for all calibration and validation sequences, respectively, and for all $t$. \textcolor{black}{However, in general, these ordered sets do not need to be the same across the calibration sequences and/or across the time steps $t$ within the $i$-th calibration sequence. 
For instance, the induced distribution may randomly pick a set $\ccalI$ from a finite set of possible sets $\ccalI$ for each time $t$. Then, in Section \ref{sec:help}, when `help from robots' is needed, a new set $\ccalI$ from this finite set can be randomly selected.
}}
\begin{align}\label{eq:Plans3}
&\tau_{\text{calib}}^i=\underbrace{s_{1,\text{calib}}(1),\dots,s_{j,\text{calib}}(1),\dots,s_{N_i,\text{calib}}(1)}_{=\bbs_{\text{calib}}(1)},\dots,\nonumber\\&
\underbrace{s_{1,\text{calib}}(t'),\dots,s_{j,\text{calib}}(t'),\dots,s_{N_i,\text{calib}}(t')}_{=\bbs_{\text{calib}}(t')},\dots,\nonumber\\&
\underbrace{s_{1,\text{calib}}(H_i),\dots,s_{j,\text{calib}}(H_i),\dots,s_{N_i,\text{calib}}(H_i)}_{=\bbs_{\text{calib}}(H_i)}.
\end{align}
\noindent
This gives rise to a calibration set  $\ccalM=\{(\bar{\ell}_{\text{calib}}^i,\tau_{\text{calib}}^i)\}_{i=1}^M$. 
%
%
%
\textcolor{black}{As before, for simplicity, we assume that each context $\bar{\ell}_{\text{calib}}^i$ has a unique correct plan $\tau_{\text{calib}}^i$; this assumption is relaxed in Remark \ref{rem:multipleFeas1}.}
%
%
We denote by $\bar{\ell}_{\text{calib}}^i(k)$ and $\tau_{\text{calib}}^i(k)$ the $k$-th entry in $\bar{\ell}_{\text{calib}}^i$ and $\tau_{\text{calib}}^i$, respectively, where $k\in\{1,\dots,T_i\}$. Note that the iterations $k$ are different from the time steps $t\in\{1,\dots,H_i\}$. 
\textcolor{black}{Each iteration $k$ refers to the prompt/decision corresponding to a specific robot index $j$ at time step $t$.} 

Next, for each calibration sequence, we define the NCS similarly to the single-robot-single-step plan case. 
Specifically, the NCS of the $i$-th calibration sequence, denoted by $\bar{r}_i$, is computed based on the lowest NCS over $k\in\{1,\dots,T_i\}$, i.e., 
\begin{equation}
\bar{r}_i=1-\bar{g}(\tau_{\text{calib}}^i|\bar{\ell}_\text{calib}^i),
\end{equation}
where
\begin{equation}\label{eq:min_seqpred}
\bar{g}(\tau_{\text{calib}}^i~|~\bar{\ell}_\text{calib}^i)=\min_{k\in\{1,\dots, T\}}g(s_\text{calib}^i(k)~|~\bar{\ell}_\text{calib}^i(k)).
\end{equation}
%

Consider a new validation scenario drawn from $\ccalD$ associated with a task $\phi_{\text{test}}$ that is defined over $N_{\text{test}}$ robots and horizon $H_{\text{test}}$ corresponding to a sequence of prompts $$\bar{\ell}_{\text{test}}=\bar{\ell}_{\text{test}}(1),\dots,\bar{\ell}_{\text{test}}(k),\dots,\bar{\ell}_{\text{test}}(T_{\text{test}}),$$
where $T_{\text{test}}=H_\text{test}\cdot N_{\text{test}}$. 
%
Using the set $\ccalM$, CP can generate a prediction set $\bar{\ccalC}(\bar{\ell}_{\text{test}})$ of plans $\tau$, containing the correct one, denoted by $\tau_{\text{test}}$,  with probability greater than $1-\alpha$, i.e., 
\begin{equation}\label{eq:CP3}
P(\tau_{\text{test}}\in \bar{\mathcal{C}}(\bar{\ell}_{\text{test}})) \geq 1-\alpha.
\end{equation}
The prediction set $\bar{\mathcal{C}}(\bar{\ell}_{\text{test}})$ is defined as: 
\begin{equation}\label{eq:pred3}
\bar{\mathcal{C}}(\bar{\ell}_{\text{test}})=\{\tau~|~\bar{g}(\tau|\bar{\ell}_{\text{test}})>1-\bar{q}\}, 
\end{equation}
where $\bar{q}$ is the $\frac{(M+1)(1-\alpha)}{M}$ 
empirical quantile of $\bar{r}_1,\dots,\bar{r}_M$.
%
The size of the prediction sets can be used to evaluate uncertainty of the LLM. Specifically, if $\bar{\mathcal{C}}(\bar{\ell}_{\text{test}})$ is singleton then this means that the LLM is certain with probability at least $1-\alpha$ that the designed multi-robot plan accomplishes the assigned task.

\underline{On-the-fly and Local Construction:}
Notice that $\bar{\mathcal{C}}(\bar{\ell}_{\text{test}})$ in \eqref{eq:pred3} is constructed after the entire sequence $\bar{\ell}_{\text{test}}=\bar\ell_{\text{test}}(1),\dots,\bar\ell_{\text{test}}(T_{\text{test}})$ is obtained. 
%
%
%
However, at test time, we do not see the entire sequence of prompts all at once; instead, the contexts $\bar{\ell}_{\text{test}}(k)$ are revealed sequentially over iterations $k$, as the robots pick their actions. 
Thus, next we construct the prediction set $\bar{\mathcal{C}}(\bar{\ell}_{\text{test}})$ on-the-fly and incrementally, using only the current and past information, as the robots take turns in querying their LLMs for action selection.
At every iteration $k\in\{1,\dots,T\}$, we construct the local prediction set associated with a robot $j$ and a time step $t$,  
\begin{equation}\label{eq:pred3local}
    \textcolor{black}{\mathcal{C}(\bar{\ell}_{\text{test}}(k))=\{\tau_{\text{test}}(k)~|~g(\tau_{\text{test}}(k)|\bar{\ell}_{\text{test}}(k)))>1-\bar{q}\},}
\end{equation}
where $g$ refers to the (uncalibrated) LLM score used in Section \ref{sec:coorddescent}. The prediction set $ \mathcal{C}(\bar{\ell}_{\text{test}}(k)))$ in \eqref{eq:pred3local}  should be used by the respective robot $j$ to select a decision $s_j(t)$. Essentially, $\mathcal{C}(\bar{\ell}_{\text{test}}(k)))$ corresponds to the set $\mathcal{C}(\ell_{j}(t)))$ used in Section \ref{sec:coorddescent} and in [line \ref{algo1:pred_set}, Alg. \ref{alg:MultiRobotLLM}].\footnote{For instance, the set $\ccalC(\bar{\ell}_{\text{test}}(k))$ with $k=N_{\text{test}}+1$ refers to the prediction set of robot $\ccalI(1)$ at time $t=2$.}
%
%
%
%
%
%
%
As it will be shown in Section \ref{sec:theory}, it holds that $\bar{\mathcal{C}}(\bar{\ell}_{\text{test}})=\hat{\ccalC}(\bar{\ell}_{\text{test}})$,
where
\begin{equation}\label{eq:causalPred3}
    \hat{\ccalC}(\bar{\ell}_{\text{test}})=\mathcal{C}(\bar{\ell}_{\text{test}}(1))\times\dots\times\mathcal{C}(\bar{\ell}_{\text{test}}(k))\dots\times\mathcal{C}(\bar{\ell}_{\text{test}}(T_{\text{test}})).
\end{equation}

%

\begin{rem}[Prediction sets]\label{rem:datasetInd}
%
\textcolor{black}{Construction of prediction sets requires generating a new calibration set for every new test scenario to ensure the desired coverage level. A dataset-conditional guarantee which holds for a fixed calibration set can also be applied \cite{vovk2012conditional}. Also, obtaining a
non-empty prediction set for a fixed $\alpha$, requires the size of the calibration dataset to satisfy: $M \geq \ceil[\big]{(M + 1)(1 - \alpha)}$.} 
\end{rem}

\begin{rem}[Efficiency Benefits]\label{rem:API}
   \textcolor{black}{Given a quantile $\bar{q}$ and the scores for each decision, the complexity of constructing the prediction sets $\ccalC(\ell_j(t))$ for all robots $j$ at time $t$ is $O(N \cdot |\ccalS|)$. This is notably more efficient than the centralized approach of Section \ref{sec:centralized}, where the corresponding complexity of constructing multi-robot/global prediction sets is $O(|\ccalS|^N)$.}
\end{rem}


\begin{rem}[Multiple Feasible Solutions]\label{rem:multipleFeas1}
\textcolor{black}{The above CP analysis assumes that $\ccalD$ generates scenarios with a unique solution. To relax this assumption, the key modification lies in the construction of the calibration dataset. Specifically, for each calibration mission scenario, among all feasible plans, we select the one constructed by picking the decision with the highest LLM confidence score at each planning step. CP can then be applied as usual, yielding prediction sets that contain, with user-specified probability, the plan $\tau_{\text{test}}$ consisting of the decisions with the highest LLM confidence scores among all feasible decisions. This is formally shown in Appendix \ref{app:proof} by following the same steps as in \cite{ren2023robots}. 
}
\end{rem}
\begin{rem}[i.i.d. Assumption]\label{rem:robustCP}
\textcolor{black}{The guarantees in \eqref{eq:CP3} hold only if
calibration and test scenarios are i.i.d. This can be relaxed by employing robust CP to obtain valid prediction sets for all distributions $\ccalD'$ that are `close' to $D$ (based on the $f-$divergence) \cite{cauchois2024robust}.} 
\end{rem}


\begin{rem} [Dataset-Conditional Guarantee]\label{rem:datasetInd}
 The probabilistic guarantee in \eqref{eq:CP3} is \textit{marginal} in the sense that the probability is over both the sampling of the calibration set $\ccalM$ and the validation point $\bar{\ell}_{\text{test}}$. Thus, a new calibration set will be needed for every single test data point $\bar{\ell}_{\text{test}}$ to ensure the desired coverage level. A dataset-conditional guarantee which holds for a fixed calibration set can also be applied \cite{vovk2012conditional}.
\end{rem}

\section{Theoretical Mission Success Rate Guarantees}\label{sec:theory}


In this section, we show that the proposed language-based multi-robot planner presented in Algorithm \ref{alg:MultiRobotLLM} achieves user-specified $1-\alpha$ mission success rates. 
To show this, we need first to state the following result; the proofs of the following results are adapted from \cite{ren2023robots}. 
\begin{prop}\label{prop:eq_prop}
The prediction set $\bar{\ccalC}(\bar{\ell}_{\text{test}})$ defined in \eqref{eq:pred3} is the same as the on-the-fly constructed prediction set $\hat{\ccalC}(\bar{\ell}_{\text{test}})$ defined in \eqref{eq:causalPred3}, i.e.,  $\bar{\ccalC}(\bar{\ell}_{\text{test}})=\hat{\ccalC}(\bar{\ell}_{\text{test}})$.
\end{prop}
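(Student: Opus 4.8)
The plan is to establish the set equality $\bar{\ccalC}(\bar{\ell}_{\text{test}})=\hat{\ccalC}(\bar{\ell}_{\text{test}})$ by double inclusion, exploiting the fact that the sequence-level score $\bar{g}$ in \eqref{eq:min_seqpred} is a \emph{minimum} of the per-iteration scores, and the threshold $1-\bar q$ is the same at the sequence level \eqref{eq:pred3} and at each local level \eqref{eq:pred3local}. The key observation I would isolate first is that a candidate plan $\tau = (\tau(1),\dots,\tau(T_{\text{test}}))$ induces, through the prompt-construction rule of Section \ref{sec:coorddescent}, a well-defined sequence of local contexts $\bar{\ell}_{\text{test}}(k)$: since each prompt $\bar{\ell}_{\text{test}}(k)$ only depends on the task data and on the decisions made at iterations $1,\dots,k-1$, fixing the first $k-1$ entries of $\tau$ determines $\bar{\ell}_{\text{test}}(k)$ unambiguously. (If the test scenario's context were fixed in advance this would be immediate; the subtlety is that the later contexts are themselves functions of the earlier chosen actions, so I would make explicit that $\hat{\ccalC}$ is to be read as the set of sequences $\tau$ such that each coordinate $\tau(k)$ lies in $\mathcal{C}(\bar\ell_{\text{test}}(k))$ with $\bar\ell_{\text{test}}(k)$ computed from the preceding coordinates of that same $\tau$.)

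Next I would unwind the two definitions. By \eqref{eq:pred3} together with \eqref{eq:min_seqpred}, $\tau\in\bar{\ccalC}(\bar{\ell}_{\text{test}})$ iff $\min_{k\in\{1,\dots,T_{\text{test}}\}} g(\tau(k)\mid\bar{\ell}_{\text{test}}(k)) > 1-\bar q$, which holds iff $g(\tau(k)\mid\bar{\ell}_{\text{test}}(k)) > 1-\bar q$ for \emph{every} $k\in\{1,\dots,T_{\text{test}}\}$. By \eqref{eq:pred3local}, the latter condition is exactly $\tau(k)\in\mathcal{C}(\bar{\ell}_{\text{test}}(k))$ for every $k$, and by \eqref{eq:causalPred3} the conjunction over all $k$ of these membership statements is precisely $\tau\in\hat{\ccalC}(\bar{\ell}_{\text{test}})=\mathcal{C}(\bar{\ell}_{\text{test}}(1))\times\dots\times\mathcal{C}(\bar{\ell}_{\text{test}}(T_{\text{test}}))$. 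Thus each step is a logical equivalence — ``min exceeds threshold'' $\Leftrightarrow$ ``each term exceeds threshold'' — so both inclusions follow simultaneously, and no quantile recomputation is involved since $\bar q$ is shared.

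The main obstacle I anticipate is purely bookkeeping rather than mathematical depth: making rigorous the sense in which the product set $\hat{\ccalC}$ is well-defined when the $k$-th factor $\mathcal{C}(\bar{\ell}_{\text{test}}(k))$ depends on which elements were selected in factors $1,\dots,k-1$. I would handle this by defining $\hat{\ccalC}(\bar{\ell}_{\text{test}})$ recursively (a sequence $\tau$ belongs to it iff $\tau(1)\in\mathcal{C}(\bar{\ell}_{\text{test}}(1))$ and, inductively, $\tau(k)\in\mathcal{C}(\bar{\ell}_{\text{test}}(k))$ where $\bar{\ell}_{\text{test}}(k)$ is built from $\tau(1),\dots,\tau(k-1)$), and noting that the same recursive dependence is already baked into the definition of $\bar{g}(\tau\mid\bar{\ell}_{\text{test}})$ in \eqref{eq:min_seqpred}, so the two objects are consistent by construction. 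A minor additional point worth a sentence is the edge case of an empty local set forcing the whole product to be empty, consistent with $\bar g$ being below threshold at that coordinate. With this convention in place the argument is a short chain of equivalences, and I would present it as such.
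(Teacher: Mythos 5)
Your proposal is correct and follows essentially the same route as the paper's proof: both arguments reduce the equality to the equivalence ``$\min_{k} g(\tau(k)\mid\bar{\ell}_{\text{test}}(k))>1-\bar q$ iff $g(\tau(k)\mid\bar{\ell}_{\text{test}}(k))>1-\bar q$ for every $k$,'' using the shared quantile $\bar q$ and the definitions in \eqref{eq:min_seqpred}, \eqref{eq:pred3}, \eqref{eq:pred3local}, and \eqref{eq:causalPred3}. Your added remark on reading $\hat{\ccalC}$ recursively, with each context $\bar{\ell}_{\text{test}}(k)$ determined by the preceding coordinates of the same $\tau$, is a reasonable clarification of a point the paper leaves implicit, but it does not change the substance of the argument.
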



\begin{proof}
It suffices to show that if a multi-robot plan $\tau$ belongs to $\bar{\ccalC}(\bar{\ell}_{\text{test}})$ then it also belongs to $\hat{\ccalC}(\bar{\ell}_{\text{test}})$ and vice-versa. 
First, we show that if $\tau\in\bar{\ccalC}(\bar{\ell}_{\text{test}})$ then $\tau\in\hat{\ccalC}(\bar{\ell}_{\text{test}})$. Since $\tau\in \bar{\mathcal{C}}(\bar{\ell}_{\text{test}})$, then we have that $\min_{k\in\{1,\dots,T_{\text{test}}\}}g(\tau(k) | \bar{\ell}_{\text{test}}(k) )>1-\bar{q}$ due to \eqref{eq:min_seqpred}. This means that $g(\tau(k) | \bar{\ell}_{\text{test}}(k) )>1-\bar{q}$, for all  $k\in \{1,\dots,T_{\text{test}}\}$. Thus, $\tau(k)\in\ccalC(\bar{\ell}_{\text{test}}(k))$, for all  $k\in \{1,\dots,T_{\text{test}}\}$. By definition of $\hat{\ccalC}(\bar{\ell}_{\text{test}})$ in \eqref{eq:causalPred3}, this implies that $\tau\in\hat{\ccalC}(\bar{\ell}_{\text{test}})$. These steps hold in the other direction too showing that if $\tau\in\hat{\ccalC}(\bar{\ell}_{\text{test}})$ then $\tau\in\bar{\ccalC}(\bar{\ell}_{\text{test}})$. 
\end{proof}


\begin{theorem}[Mission Success Rate]\label{thm1}
Assume that prediction sets are constructed on-the-fly/causally with coverage level $1-\alpha$ and that the robots seek help from a user whenever the local prediction set $\ccalC(\bar{\ell}_{\text{test}}(k))$ - defined in \eqref{eq:pred3local} - is not singleton after $W$ attempts; see Section \ref{sec:help}. (a) \textcolor{black}{Assuming error-free execution of the designed plans}, the task completion rate over new test scenarios (and the randomness of the calibration sets) drawn from $\ccalD$ is at least $1-\alpha$. (b) If $\bar{g}(\tau|\bar{\ell}_{\text{test}})$, used in \eqref{eq:pred3local}, models true conditional probabilities, \textcolor{black}{then the average amount of user help}, modeled by the average size of the prediction sets  $\bar{\ccalC}(\bar{\ell}_{\text{test}})$, is minimized among possible prediction schemes that achieve $1-\alpha$ mission success rates.
\end{theorem}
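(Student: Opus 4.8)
The plan is to prove parts (a) and (b) largely by reducing them to the single-sequence conformal prediction guarantees already established, and then translating a statement about the global prediction set $\bar{\ccalC}(\bar{\ell}_{\text{test}})$ into a statement about the on-the-fly execution of Algorithm \ref{alg:MultiRobotLLM}. For part (a), I would first invoke the standard CP coverage guarantee \eqref{eq:CP3}, which gives $P(\tau_{\text{test}}\in\bar{\ccalC}(\bar{\ell}_{\text{test}}))\geq 1-\alpha$ over the randomness of the calibration set and the test scenario, using the sequence-level NCS \eqref{eq:min_seqpred} and the i.i.d. assumption on scenarios drawn from $\ccalD$. By Proposition \ref{prop:eq_prop}, $\bar{\ccalC}(\bar{\ell}_{\text{test}})=\hat{\ccalC}(\bar{\ell}_{\text{test}})$, so with probability at least $1-\alpha$ the ground-truth plan $\tau_{\text{test}}$ satisfies $\tau_{\text{test}}(k)\in\ccalC(\bar{\ell}_{\text{test}}(k))$ for every iteration $k$. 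The main work is then to argue that whenever this event holds, Algorithm \ref{alg:MultiRobotLLM} actually \emph{outputs} a correct plan: if every local prediction set is a singleton, the algorithm is forced to select $\tau_{\text{test}}(k)$ at each step (since the correct decision is in the set and the set has one element), so $\tau=\tau_{\text{test}}$; if some local set is non-singleton, the help mechanism (reordering via $\ccalI$, and ultimately a human operator after $W$ attempts) is invoked, and by assumption the user picks a feasible decision from the prediction set, which — conditioned on the coverage event — contains the correct one. I would need to be careful here about the reordering step: the footnote in Section \ref{sec:help} claims this does not break coverage because $\ccalD$ is assumed to generate sequences whose robot order may vary with $t$, so I would state this explicitly as the reason the coverage event is preserved under reordering.

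For part (b), the plan is to appeal to the optimality characterization of conformal prediction sets under the assumption that the score $\bar{g}$ recovers the true conditional probabilities. Under that assumption, the CP threshold $1-\bar{q}$ produces, among all prediction-set-valued procedures with marginal coverage $1-\alpha$, the one minimizing expected set size — this is the standard "oracle" argument that thresholding the true conditional likelihood yields the smallest level-$(1-\alpha)$ prediction region, analogous to the Neyman–Pearson-style argument used in \cite{ren2023robots}. Since the amount of user help in Algorithm \ref{alg:MultiRobotLLM} is (by the theorem's own modeling assumption) measured by the average size of $\bar{\ccalC}(\bar{\ell}_{\text{test}})$, and by Proposition \ref{prop:eq_prop} this equals the product of the local set sizes, minimizing the global set size is equivalent to minimizing the cumulative local help burden. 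I would spell out that any alternative scheme achieving $1-\alpha$ must have prediction sets at least as large in expectation, hence at least as much help.

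The step I expect to be the main obstacle is the rigorous bridging in part (a) between the static coverage statement and the dynamic, help-augmented execution — in particular handling the reordering-based help requests without circularity. The subtlety is that the choice of a new order $\ccalI$ at test time is itself data-dependent (it is triggered by observing a non-singleton set), so one must be sure the i.i.d./exchangeability structure underpinning \eqref{eq:CP3} still applies to whatever sequence is ultimately realized. I would resolve this by leaning on the modeling assumption, stated in Section \ref{sec:cp} and its footnotes, that $\ccalD$'s induced distribution over sequences already marginalizes over the (finite) set of admissible orders $\ccalI$ at each time step, so that any order reachable by the help mechanism corresponds to a sequence in the support of the calibration distribution; this makes the coverage event well-defined along the actually-executed trajectory. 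A secondary, lesser obstacle is the "multiple feasible solutions" case, but per Remark \ref{rem:multipleFeas1} this is handled by redefining the ground-truth plan as the highest-confidence feasible plan and is deferred to Appendix \ref{app:proof}, so I would simply note that the argument above goes through verbatim with $\tau_{\text{test}}$ reinterpreted accordingly.
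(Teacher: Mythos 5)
Your proposal is correct and follows essentially the same route as the paper: part (a) combines the sequence-level coverage guarantee \eqref{eq:CP3} with Proposition \ref{prop:eq_prop} and then argues that, on the coverage event, the executed plan (via singleton selection or faithful user help, with reordering justified by the distribution-over-orders assumption) coincides with the correct plan, which is exactly the paper's Case I/II/III analysis; part (b) is the same appeal to the optimality of thresholding true conditional probabilities, which the paper cites as Theorem 1 of \cite{sadinle2019least}. No substantive gap.
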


\begin{proof}
(a) To show this result, we consider the following cases.
Case I: We have that $|\mathcal{C}(\bar{\ell}_{\text{test}}(k))|=1$, $\forall k\in\{1,\dots,T_{\text{test}}\}$ (with or without the robots `helping' each other) and $\tau_{\text{test}}\in\hat{C}(\bar{\ell}_{\text{test}})$ where $\tau_{\text{test}}$ is the ground truth plan and 
$\hat{C}(\bar{\ell}_{\text{test}})$ is defined as in \eqref{eq:causalPred3}. In this case, the robots will select the correct plan.
%
Case II: We have that $\tau_{\text{test}}\in\hat{C}(\bar{\ell}_{\text{test}})$, regardless of the size of the local prediction sets $\mathcal{C}(\bar{\ell}_{\text{test}}(k))$, $\forall k\in\{1,\dots,T_{\text{test}}\}$.
Here, the robots will select the correct plan assuming users who faithfully provide help; \textcolor{black}{otherwise a distribution shift may occur as calibration sequences are constructed using correct decisions.}
Case III: We have that $\tau_{\text{test}}\notin\hat{\ccalC}(\bar{\ell}_{\text{test}})$. The latter means that there exists at least one iteration $k$ such that $\tau_{\text{test}}(k)\notin{\ccalC}(\bar{\ell}_{\text{test}}(k))$. In this case, the robots will compute an incorrect plan.
Observe that the probability that either Case I or II will occur is equivalent to the probability $P(\tau_{\text{test}}\in \hat{{\mathcal{C}}}(\bar{\ell}_{\text{test}}))$. Due to Prop. \ref{prop:eq_prop} and \eqref{eq:CP3}, we have that $P(\tau_{\text{test}}\in \hat{\mathcal{C}}(\bar{\ell}_{\text{test}})) \geq 1-\alpha$. Thus, either of Case I and II will occur with probability that is at least equal to $1-\alpha$. Since Cases I-III are mutually and collectively exhaustive, we conclude that the probability that Case III will occur is less than $\alpha$. This implies that the mission success rate is at least $1-\alpha$.
%
(b) This result holds directly due to Th. 1 in \cite{sadinle2019least}.
\end{proof}

\begin{rem}[Multiple Feasible Solutions]\label{rem:multiFeas2}
    \textcolor{black}{The above results hold in case of multiple feasible solutions too. The proof of Th. \ref{thm1} follows the same steps, with the only difference being that (i) the correct plan $\tau_{\text{test}}$ refers to the feasible plan comprising the decisions with the highest LLM confidence scores at each step, among all feasible plans, and (ii) when user help is required, the user selects the feasible action with the highest LLM confidence score from the prediction set. 
    Thus, the robot must also return to the user the scores for each action in the set.}
\end{rem}

\section{Experiments}\label{sec:experiments}

\begin{figure*}[t] 
\centering
\includegraphics[width=\linewidth]{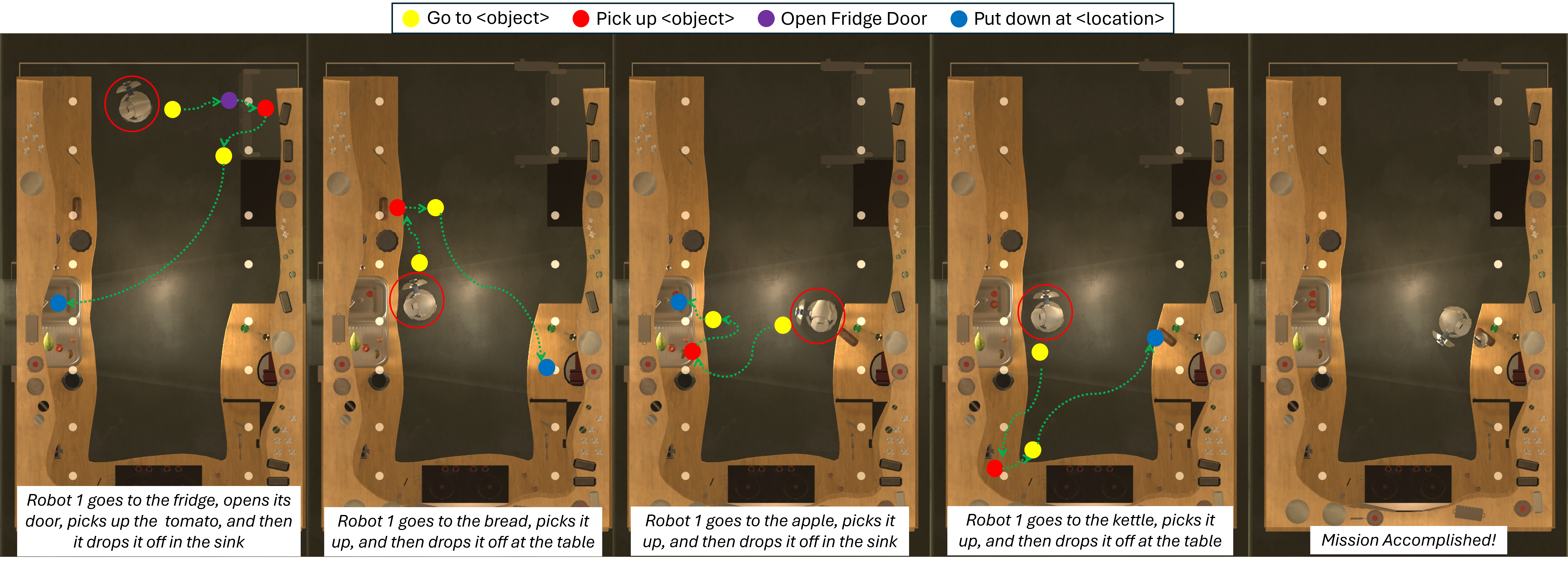}\vspace{-0.5cm}  
\caption{Execution of a single-robot plan, generated by S-ATLAS, for a mission: `Deliver a tomato and an apple to the sink. Also, deliver the kettle and the bread to the table'. 
} 
\label{fig:snapshot_single}\vspace{-0.3cm}
\end{figure*}

\begin{figure*}[t] 
\centering
\includegraphics[width=\linewidth]{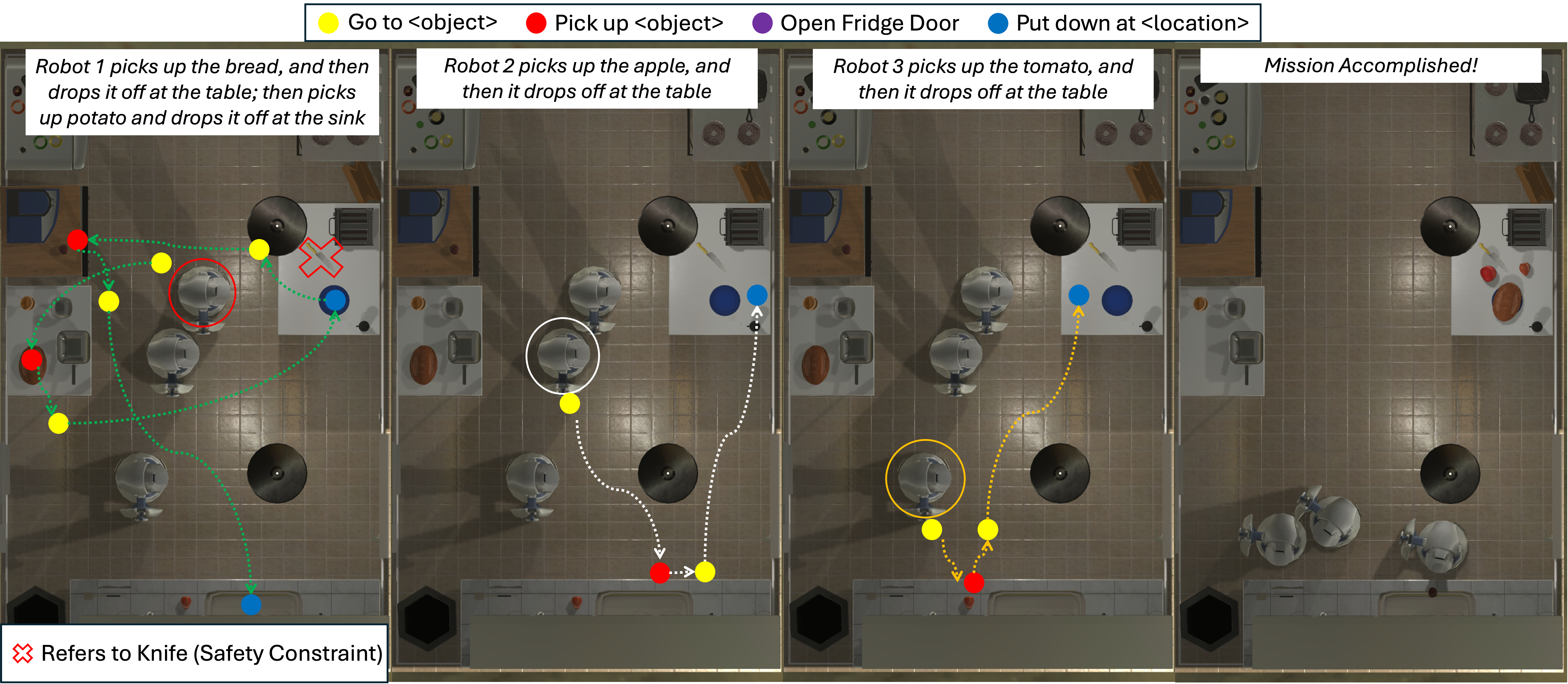}\vspace{-0.5cm} 
\caption{Execution of a three-robot plan, generated by S-ATLAS, for the mission: `Deliver the bread, apple and tomato to the table; Deliver the potato to the sink. Also, Robot 1 should never pick up a knife'. The first three snapshots left illustrate the plans, generated by S-ATLAS, followed by each robot to achieve certain sub-tasks of the mission. The fourth snapshot shows the environment when all the sub-tasks are completed. } 
\label{fig:snapshot_multi}\vspace{-0.3cm}
\end{figure*}

To illustrate the performance of the proposed planner, we conduct extensive comparative experiments considering home service robot tasks on the AI2THOR simulator \cite{kolve2017ai2}. In Section 
\ref{sec:cpresult}, we empirically validate the theoretical mission completion rates of our planner stated in Theorem \ref{thm1}. \textcolor{black}{In Section \ref{sec:knownoComp}, we compare our method against the centralized baselines from Section \ref{sec:centralized}.} Our comparisons demonstrate that our planner is more computationally efficient and achieves significantly lower help rates. Additional comparisons against LLM-based multi-robot planners that do not utilize CP and do not allow robots to ask for help can be found in Appendix \ref{app:Comp}. 
%
%
Finally, in Section \ref{sec:effectOrder}, we demonstrate how assistance from robot teammates can potentially help robots make more certain decisions. 
In all case studies, we pick GPT-3.5,  \textcolor{black}{Llama-2-7b}, and Llama-3-8b as the LLM for all robots.

\subsection{Empirical Validation of Mission Success Rates}\label{sec:cpresult}

%
%

%
We consider home service tasks defined over the following $12$ semantic objects with labels `Apple', `Kettle', `Tomato', `Bread', `Potato', and `Knife'. The set $\ccalA$ includes the actions \text{`go to, grab object, put object down, open door, remain idle'}.  The action `remain idle' is useful when  all sub-tasks in $\phi$ have been assigned to other robots or when the task has been accomplished before the given horizon $H$. 
%
The number of decisions that the LLM can pick from is $|\mathcal{S}|=28$. 

\textcolor{black}{We also generate $110$ (validation) scenarios $\xi$ with $N\in\{1, 3, 10, 15\}$ in various semantic environments. Each mission consists of $K$ sub-tasks and at most one  safety requirement. Each sub-task requires moving a specific semantic object, possibly located inside closed containers (e.g., drawers and fridge), to either a desired location or one of several possible destinations. 
The safety constraint requires certain robots to avoid approaching/grabbing specific objects. We  select $K\leq 4$ for $N=1$, $4\leq K\leq 8$ for $N\in\{3,10\}$, and $K=10$ when $N=15$. Observe that $N\leq K$ implies that a robot will have to accomplish at least one sub-task while $N\leq K$ means that at least one robot will stay idle throughout the mission.} \textcolor{black}{Notice that the considered tasks may have multiple feasible plans and, therefore, CP is applied as in Remarks \ref{rem:multipleFeas1} and \ref{rem:multiFeas2}.} Examples of a task for $N=1$ and $N=3$, along with the plans designed by S-ATLAS, coupled with GPT 3.5, are shown in Figures \ref{fig:snapshot_single} and \ref{fig:snapshot_multi}, respectively; see also the demonstrations in \cite{DemoSafeMultiLLM}. 
%
%

In what follows, we empirically validate the mission success rates discussed in Section \ref{sec:theory} on our validation scenarios. Specifically, we investigate if we can achieve desired mission success rate $1-\alpha$. For each validation scenario, we generate $30$ calibration sequences to construct prediction sets. Then, for each scenario, we compute the plan using Alg. \ref{alg:MultiRobotLLM} with $W=0$ (to minimize API calls) and (manually) check if the robot plan is the ground truth plan. We compute the ratio of how many of the corresponding $110$ generated plans are the ground truth ones.  
We repeat this $50$ times. The average ratio across all experiments is the (empirical) mission success rate. When $1-\alpha=0.90$ and $1-\alpha=0.95$, the mission success rate was $94.59\%$ and $96.57\%$ respectively, using GPT 3.5, validating Theorem \ref{thm1}. 
\textcolor{black}{The average runtime to design a plan  was $0.4$ minutes (by querying GPT 3.5 $S\cdot N$ times, in parallel, to design $\bbs(t)$).}
When $1-\alpha=0.90$ and $1-\alpha=0.95$, the average percentage of local prediction sets $\ccalC(\ell_{\text{test}}(k))$ that were singletons was $95.56\%$ and $91.90\%$, respectively. 
This equivalently means that when $1-\alpha=0.90$ and $1-\alpha=0.95$, help was requested in making $4.44\%$ and $8.1\%$ of all decisions $s_j(t)$, respectively. As expected, the frequency of help requests increases as the desired mission success rate increases. Help rates also depend on the choice of the LLM. \textcolor{black}{For instance, when our planner is paired with Llama 3-8b, which is significantly smaller and, therefore, less effective than GPT 3.5, the help rate increases to $8.9\%$ and $15.6\%$ for $1-\alpha=0.90$ and $1-\alpha=0.95$, respectively. When Llama 2-7b, which preceded Llama 3-8b, is considered, the help rates increase to 25\% and 40\% for $1-\alpha=0.90$ and $1-\alpha=0.95$, respectively.}

\begin{table}[t]
\centering
\begin{tabular}{|c|cc|cc|}
\hline
\multirow{2}{*}{}         & \multicolumn{2}{c|}{Llama2-7b}       & \multicolumn{2}{c|}{Llama3-8b}          \\ \cline{2-5} 
                          & \multicolumn{1}{c|}{Ours}   & \textcolor{black}{B} & \multicolumn{1}{c|}{Ours}    & \textcolor{black}{B}   \\ \hline
Runtime \textcolor{black}{for plan design} (mins)         & \multicolumn{1}{c|}{3}      & 23     & \multicolumn{1}{c|}{4}       & 24       \\ \hline
Help Rate($1-\alpha=80\%$) & \multicolumn{1}{c|}{$16\%$} & $62\%$ & \multicolumn{1}{c|}{$1.4\%$} & $13.4\%$ \\ \hline
Help Rate($1-\alpha=90\%$) & \multicolumn{1}{c|}{$29\%$} & $88\%$ & \multicolumn{1}{c|}{$8\%$}   & $30\%$   \\ \hline
\end{tabular}
\caption{\textcolor{black}{Comparison of S-ATLAS \textcolor{black}{against the baseline (B)} for $N=2$.}}\vspace{-1.1cm}
\label{table:knowno_comp}
\end{table}


\subsection{Comparisons against Conformal Centralized Baselines}
\label{sec:knownoComp}

\textcolor{black}{First, we compare our planner against the centralized baseline discussed in Section \ref{sec:centralized}, in terms of help rates and computational efficiency. Evaluating the baseline in multi-robot scenarios is computationally challenging and impractical due to the large number $|\ccalS|^N$ of multi-robot decisions that need to be considered; e.g., for $N=15$, we have $|\ccalS|^N=5.0977\cdot 10^{21}$. Specifically, the baseline designs $\bbs(t)$ by solving a single MCQA problem with $|\ccalS|^N$ choices. In contrast, our planner solves sequentially $N$ MCQA problems with $|\ccalS|$ choices each. Also, recall that application of CP requires obtaining the LLM confidence scores for each option in the MCQA problem. To obtain them, our implementation requires one LLM query per option. Thus, the baseline requires $|\ccalS|^N$ queries to apply CP at time $t$ (while ours require $N\cdot|\ccalS|$). 
This results in prohibitively high API costs (using GPT 3.5) and further compromises the computational efficiency of the baseline (using either model). 
%
To enable comparisons, we consider small teams with $N=2$ robots and $|\ccalS|=28$ using  Llama 2-7b \textcolor{black}{and Llama 3-8b; see Table \ref{table:knowno_comp}}. We generate $20$ test scenarios and, for each scenario, we collect $20$ calibration sequences. We compute plans using S-ATLAS (with $W=0$) and the baseline. We repeat this $50$ times. \textcolor{black}{Both methods were exposed to the same validation and calibration data.}  \textcolor{black}{Observe in Table \ref{table:knowno_comp} that} our method (i) requires significantly lower help rates from users than the baseline,\footnote{\textcolor{black}{We observed that larger sets of options tend to increase help rates.}} and (ii) computes plans faster.\footnote{\textcolor{black}{Note that these runtimes are implementation-specific. For example, our method (and, similarly, the baseline) can be accelerated by parallelizing the extraction of $|\ccalS|$ LLM confidence scores during the construction of $\ccalC(\ell_j(t))$.
}} \textcolor{black}{We attribute these to the distributed nature of our planner enabling it to solve `smaller' MCQA scenarios, compared to the baseline, to design $\bbs(t)$.}}\footnote{\textcolor{black}{Our implementation of S-ATLAS, paired with Llama 3-8b, in an unseen test scenario with 
$N=2$ robots, can be found on this \href{https://drive.google.com/file/d/1Ds5wulsNRdXPb1KE6gJyMU9LZlUkcs1N/view?usp=sharing}{link}.}} 

\textcolor{black}{Second, we evaluate the baseline on the previously considered case studies, using the more efficient setup discussed in Rem. \ref{rem:baseline2}.  We query GPT-3.5 to generate a set of $X=4$ multi-robot decisions as in \cite{ren2023robots}. Our empirical results show that the LLM's ability to generate options containing a valid choice decreases as $N$ increases. For $N=1, 2, 3$ and $10$, valid options were included in the LLM-generated set in only $95\%$, $85\%$, $48\%$,  and $20\%$ of the planning steps, leading to incomplete plans.} \textcolor{black}{Note that increasing $X$ does not necessarily yield better results; e.g., for $N=3$, the corresponding percentages for $X=6, 8,$ and $10$  are $52\%$, $48\%$, and $48\%$.} 

\subsection{Asking for Help from Robot Teammates}\label{sec:effectOrder}

In this section, we select five scenarios from Section \ref{sec:cpresult}, with $N=3$, where non-singleton prediction sets were constructed using GPT 3.5. We show that re-computing $\bbs(t)$ using a different set $\ccalI$ can potentially result in smaller prediction sets. In each of these scenarios, we select $W=1$, and we observe that two scenarios switched to singleton prediction sets. 
In the first scenario, the task $\phi$ requires the robots to move a tomato and a water bottle to the sink, put the kettle to stove burner, and the bread at the table. We initialize the ordered set of robot indices as $\ccalI = \{1, 2, 3 \}$. In this scenario, the bottle of water is inside the closed fridge and, therefore, grabbing the bottle requires first opening the fridge door. The prediction set $\ccalC(\ell_1(1))$ for the robot $\ccalI(1)=1$ at $t=1$ is non-singleton and contains the following actions: {(go to location of water bottle), (go to the location of the fridge)}. In this case, robot $1$ (randomly) generates a new ordered set: $\ccalI' = \{2, 1, 3\}$ and asks all robots to select their decisions as per $\ccalI'$. Thus,  the robot with index $\ccalI'(1)=2$ will select first an action. In this case, the corresponding prediction set is singleton defined as {(go to the location of the fridge)}. Recall from Section \ref{sec:help} that if a singleton prediction set cannot be constructed after trying $W$ different sets $\ccalI$, then help from a user will be requested. Similar observations were made in the second scenario requiring the robots to move multiple objects at desired locations. We note again here that changing the set $\ccalI$ may not always result in smaller prediction sets, as this minimally affects the prompts. 

\section{Conclusions, Limitations, and Future Work}\label{sec:concl}

This paper proposed S-ATLAS, a new decentralized planner for language-instructed robot teams. We showed both theoretically and empirically that the proposed planner can achieve desired mission success rates while asking for help from users only when necessary. 
In our future work we plan to relax the assumption of perfect robot skills \textcolor{black}{by leveraging CP to reason simultaneously about uncertainties arising from LLMs and robot skill execution.}
\bibliographystyle{IEEEtran}
\bibliography{YK_bib.bib}

\begin{thebibliography}{10}
\providecommand{\url}[1]{#1}
\csname url@samestyle\endcsname
\providecommand{\newblock}{\relax}
\providecommand{\bibinfo}[2]{#2}
\providecommand{\BIBentrySTDinterwordspacing}{\spaceskip=0pt\relax}
\providecommand{\BIBentryALTinterwordstretchfactor}{4}
\providecommand{\BIBentryALTinterwordspacing}{\spaceskip=\fontdimen2\font plus
\BIBentryALTinterwordstretchfactor\fontdimen3\font minus \fontdimen4\font\relax}
\providecommand{\BIBforeignlanguage}[2]{{%
\expandafter\ifx\csname l@#1\endcsname\relax
\typeout{** WARNING: IEEEtran.bst: No hyphenation pattern has been}%
\typeout{** loaded for the language `#1'. Using the pattern for}%
\typeout{** the default language instead.}%
\else
\language=\csname l@#1\endcsname
\fi
#2}}
\providecommand{\BIBdecl}{\relax}
\BIBdecl

\bibitem{schlotfeldt2018anytime}
B.~Schlotfeldt, D.~Thakur, N.~Atanasov, V.~Kumar, and G.~J. Pappas, ``Anytime planning for decentralized multirobot active information gathering,'' \emph{IEEE Robotics and Automation Letters}, vol.~3, no.~2, pp. 1025--1032, 2018.

\bibitem{kantaros2016distributed}
Y.~Kantaros and M.~M. Zavlanos, ``Distributed communication-aware coverage control by mobile sensor networks,'' \emph{Automatica}, vol.~63, pp. 209--220, 2016.

\bibitem{gosrich2022coverage}
W.~Gosrich, S.~Mayya, R.~Li, J.~Paulos, M.~Yim, A.~Ribeiro, and V.~Kumar, ``Coverage control in multi-robot systems via graph neural networks,'' in \emph{2022 International Conference on Robotics and Automation (ICRA)}.\hskip 1em plus 0.5em minus 0.4em\relax IEEE, 2022, pp. 8787--8793.

\bibitem{elimelech2022efficient}
K.~Elimelech, L.~E. Kavraki, and M.~Y. Vardi, ``Efficient task planning using abstract skills and dynamic road map matching,'' in \emph{The International Symposium of Robotics Research}.\hskip 1em plus 0.5em minus 0.4em\relax Springer, 2022, pp. 487--503.

\bibitem{kantaros2020stylus}
Y.~Kantaros and M.~M. Zavlanos, ``Stylus*: A temporal logic optimal control synthesis algorithm for large-scale multi-robot systems,'' \emph{The International Journal of Robotics Research}, vol.~39, no.~7, pp. 812--836, 2020.

\bibitem{turpin2014capt}
M.~Turpin, N.~Michael, and V.~Kumar, ``Capt: Concurrent assignment and planning of trajectories for multiple robots,'' \emph{The International Journal of Robotics Research}, vol.~33, no.~1, pp. 98--112, 2014.

\bibitem{pianpak2019distributed}
P.~Pianpak, T.~C. Son, P.~O. Toups~Dugas, and W.~Yeoh, ``A distributed solver for multi-agent path finding problems,'' in \emph{Proceedings of the First International Conference on Distributed Artificial Intelligence}, 2019, pp. 1--7.

\bibitem{fang2022automated}
A.~Fang and H.~Kress-Gazit, ``Automated task updates of temporal logic specifications for heterogeneous robots,'' in \emph{2022 International Conference on Robotics and Automation (ICRA)}.\hskip 1em plus 0.5em minus 0.4em\relax IEEE, 2022, pp. 4363--4369.

\bibitem{chen2024fast}
Z.~Chen, Z.~Zhou, S.~Wang, J.~Li, and Z.~Kan, ``Fast temporal logic mission planning of multiple robots: A planning decision tree approach,'' \emph{IEEE Robotics and Automation Letters}, 2024.

\bibitem{vasilopoulos2018reactive}
V.~Vasilopoulos and D.~E. Koditschek, ``Reactive navigation in partially known non-convex environments,'' in \emph{International Workshop on the Algorithmic Foundations of Robotics}.\hskip 1em plus 0.5em minus 0.4em\relax Springer, 2018, pp. 406--421.

\bibitem{karaman2011sampling}
S.~Karaman and E.~Frazzoli, ``Sampling-based algorithms for optimal motion planning,'' \emph{The International Journal of Robotics Research}, vol.~30, no.~7, pp. 846--894, 2011.

\bibitem{kavraki1996probabilistic}
L.~E. Kavraki, P.~Svestka, J.-C. Latombe, and M.~H. Overmars, ``Probabilistic roadmaps for path planning in high-dimensional configuration spaces,'' \emph{IEEE transactions on Robotics and Automation}, vol.~12, no.~4, pp. 566--580, 1996.

\bibitem{garrett2021integrated}
C.~R. Garrett, R.~Chitnis, R.~Holladay, B.~Kim, T.~Silver, L.~P. Kaelbling, and T.~Lozano-P{\'e}rez, ``Integrated task and motion planning,'' \emph{Annual review of control, robotics, and autonomous systems}, 2021.

\bibitem{antonyshyn2023multiple}
L.~Antonyshyn, J.~Silveira, S.~Givigi, and J.~Marshall, ``Multiple mobile robot task and motion planning: A survey,'' \emph{ACM Computing Surveys}, vol.~55, no.~10, pp. 1--35, 2023.

\bibitem{baier2008principles}
C.~Baier and J.-P. Katoen, \emph{Principles of model checking}.\hskip 1em plus 0.5em minus 0.4em\relax MIT press Cambridge, 2008, vol. 26202649.

\bibitem{sutton2018reinforcement}
R.~S. Sutton and A.~G. Barto, \emph{Reinforcement learning: An introduction}.\hskip 1em plus 0.5em minus 0.4em\relax MIT press, 2018.

\bibitem{koehn2009statistical}
P.~Koehn, \emph{Statistical machine translation}.\hskip 1em plus 0.5em minus 0.4em\relax Cambridge University Press, 2009.

\bibitem{tellex2011understanding}
S.~Tellex, T.~Kollar, S.~Dickerson, M.~Walter, A.~Banerjee, S.~Teller, and N.~Roy, ``Understanding natural language commands for robotic navigation and mobile manipulation,'' in \emph{Proceedings of the AAAI Conference on Artificial Intelligence}, vol.~25, no.~1, 2011, pp. 1507--1514.

\bibitem{matuszek2010following}
C.~Matuszek, D.~Fox, and K.~Koscher, ``Following directions using statistical machine translation,'' in \emph{2010 5th ACM/IEEE International Conference on Human-Robot Interaction (HRI)}.\hskip 1em plus 0.5em minus 0.4em\relax IEEE, 2010, pp. 251--258.

\bibitem{chen2011learning}
D.~Chen and R.~Mooney, ``Learning to interpret natural language navigation instructions from observations,'' in \emph{Proceedings of the AAAI Conference on Artificial Intelligence}, vol.~25, no.~1, 2011, pp. 859--865.

\bibitem{wong2006learning}
Y.~W. Wong and R.~Mooney, ``Learning for semantic parsing with statistical machine translation,'' in \emph{Proceedings of the Human Language Technology Conference of the NAACL, Main Conference}, 2006, pp. 439--446.

\bibitem{kollar2010toward}
T.~Kollar, S.~Tellex, D.~Roy, and N.~Roy, ``Toward understanding natural language directions,'' in \emph{2010 5th ACM/IEEE International Conference on Human-Robot Interaction (HRI)}.\hskip 1em plus 0.5em minus 0.4em\relax IEEE, 2010, pp. 259--266.

\bibitem{howard2014natural}
T.~M. Howard, S.~Tellex, and N.~Roy, ``A natural language planner interface for mobile manipulators,'' in \emph{2014 IEEE International Conference on Robotics and Automation (ICRA)}.\hskip 1em plus 0.5em minus 0.4em\relax IEEE, 2014, pp. 6652--6659.

\bibitem{achiam2023gpt}
J.~Achiam, S.~Adler, S.~Agarwal, L.~Ahmad, I.~Akkaya, F.~L. Aleman, D.~Almeida, J.~Altenschmidt, S.~Altman, S.~Anadkat \emph{et~al.}, ``Gpt-4 technical report,'' \emph{arXiv preprint arXiv:2303.08774}, 2023.

\bibitem{touvron2023llama}
H.~Touvron, L.~Martin, K.~Stone, P.~Albert, A.~Almahairi, Y.~Babaei, N.~Bashlykov, S.~Batra, P.~Bhargava, S.~Bhosale \emph{et~al.}, ``Llama 2: Open foundation and fine-tuned chat models,'' \emph{arXiv preprint arXiv:2307.09288}, 2023.

\bibitem{driess2023palm}
D.~Driess, F.~Xia, M.~S. Sajjadi, C.~Lynch, A.~Chowdhery, B.~Ichter, A.~Wahid, J.~Tompson, Q.~Vuong, T.~Yu \emph{et~al.}, ``Palm-e: An embodied multimodal language model,'' \emph{arXiv preprint arXiv:2303.03378}, 2023.

\bibitem{xi2023rise}
Z.~Xi, W.~Chen, X.~Guo, W.~He, Y.~Ding, B.~Hong, M.~Zhang, J.~Wang, S.~Jin, E.~Zhou \emph{et~al.}, ``The rise and potential of large language model based agents: A survey,'' \emph{arXiv preprint arXiv:2309.07864}, 2023.

\bibitem{singh2023progprompt}
I.~Singh, V.~Blukis, A.~Mousavian, A.~Goyal, D.~Xu, J.~Tremblay, D.~Fox, J.~Thomason, and A.~Garg, ``Progprompt: Generating situated robot task plans using large language models,'' in \emph{IEEE International Conference on Robotics and Automation (ICRA)}, 2023, pp. 11\,523--11\,530.

\bibitem{liang2023code}
J.~Liang, W.~Huang, F.~Xia, P.~Xu, K.~Hausman, B.~Ichter, P.~Florence, and A.~Zeng, ``Code as policies: Language model programs for embodied control,'' in \emph{IEEE International Conference on Robotics and Automation (ICRA)}, 2023, pp. 9493--9500.

\bibitem{shah2023lm}
D.~Shah, B.~Osi{\'n}ski, S.~Levine \emph{et~al.}, ``Lm-nav: Robotic navigation with large pre-trained models of language, vision, and action,'' in \emph{Conference on Robot Learning}.\hskip 1em plus 0.5em minus 0.4em\relax PMLR, 2023, pp. 492--504.

\bibitem{xie2023translating}
Y.~Xie, C.~Yu, T.~Zhu, J.~Bai, Z.~Gong, and H.~Soh, ``Translating natural language to planning goals with large-language models,'' \emph{arXiv preprint arXiv:2302.05128}, 2023.

\bibitem{ding2023task}
Y.~Ding, X.~Zhang, C.~Paxton, and S.~Zhang, ``Task and motion planning with large language models for object rearrangement,'' \emph{arXiv preprint arXiv:2303.06247}, 2023.

\bibitem{liu2023llm+}
B.~Liu, Y.~Jiang, X.~Zhang, Q.~Liu, S.~Zhang, J.~Biswas, and P.~Stone, ``Llm+ p: Empowering large language models with optimal planning proficiency,'' \emph{arXiv preprint arXiv:2304.11477}, 2023.

\bibitem{wu2023tidybot}
J.~Wu, R.~Antonova, A.~Kan, M.~Lepert, A.~Zeng, S.~Song, J.~Bohg, S.~Rusinkiewicz, and T.~Funkhouser, ``Tidybot: Personalized robot assistance with large language models,'' \emph{arXiv preprint arXiv:2305.05658}, 2023.

\bibitem{zeng2022socratic}
A.~Zeng, M.~Attarian, B.~Ichter, K.~Choromanski, A.~Wong, S.~Welker, F.~Tombari, A.~Purohit, M.~Ryoo, V.~Sindhwani \emph{et~al.}, ``Socratic models: Composing zero-shot multimodal reasoning with language,'' \emph{arXiv preprint arXiv:2204.00598}, 2022.

\bibitem{stepputtis2020language}
S.~Stepputtis, J.~Campbell, M.~Phielipp, S.~Lee, C.~Baral, and H.~Ben~Amor, ``Language-conditioned imitation learning for robot manipulation tasks,'' \emph{Advances in Neural Information Processing Systems}, vol.~33, pp. 13\,139--13\,150, 2020.

\bibitem{li2022pre}
S.~Li, X.~Puig, C.~Paxton, Y.~Du, C.~Wang, L.~Fan, T.~Chen, D.-A. Huang, E.~Aky{\"u}rek, A.~Anandkumar \emph{et~al.}, ``Pre-trained language models for interactive decision-making,'' \emph{Advances in Neural Information Processing Systems}, vol.~35, pp. 31\,199--31\,212, 2022.

\bibitem{huang2022inner}
W.~Huang, F.~Xia, T.~Xiao, H.~Chan, J.~Liang, P.~Florence, A.~Zeng, J.~Tompson, I.~Mordatch, Y.~Chebotar \emph{et~al.}, ``Inner monologue: Embodied reasoning through planning with language models,'' \emph{Conference on Robot Learning (CoRL)}, 2022.

\bibitem{ruan2023tptu}
J.~Ruan, Y.~Chen, B.~Zhang, Z.~Xu, T.~Bao, G.~Du, S.~Shi, H.~Mao, X.~Zeng, and R.~Zhao, ``Tptu: Task planning and tool usage of large language model-based ai agents,'' \emph{arXiv preprint arXiv:2308.03427}, 2023.

\bibitem{ahn2022can}
M.~Ahn, A.~Brohan, N.~Brown, Y.~Chebotar, O.~Cortes, B.~David, C.~Finn, C.~Fu, K.~Gopalakrishnan, K.~Hausman \emph{et~al.}, ``Do as i can, not as i say: Grounding language in robotic affordances,'' \emph{Conference on Robot Learning (CoRL)}, 2022.

\bibitem{luo2023obtaining}
X.~Luo, S.~Xu, and C.~Liu, ``Obtaining hierarchy from human instructions: an llms-based approach,'' in \emph{CoRL 2023 Workshop on Learning Effective Abstractions for Planning (LEAP)}, 2023.

\bibitem{joublin2023copal}
F.~Joublin, A.~Ceravola, P.~Smirnov, F.~Ocker, J.~Deigmoeller, A.~Belardinelli, C.~Wang, S.~Hasler, D.~Tanneberg, and M.~Gienger, ``Copal: Corrective planning of robot actions with large language models,'' \emph{arXiv preprint arXiv:2310.07263}, 2023.

\bibitem{dai2023optimal}
Z.~Dai, A.~Asgharivaskasi, T.~Duong, S.~Lin, M.-E. Tzes, G.~Pappas, and N.~Atanasov, ``Optimal scene graph planning with large language model guidance,'' \emph{arXiv preprint arXiv:2309.09182}, 2023.

\bibitem{yang2024text2reaction}
Z.~Yang, L.~Ning, H.~Wang, T.~Jiang, S.~Zhang, S.~Cui, H.~Jiang, C.~Li, S.~Wang, and Z.~Wang, ``Text2reaction: Enabling reactive task planning using large language models,'' \emph{IEEE Robotics and Automation Letters}, 2024.

\bibitem{tang2023graspgpt}
C.~Tang, D.~Huang, W.~Ge, W.~Liu, and H.~Zhang, ``Graspgpt: Leveraging semantic knowledge from a large language model for task-oriented grasping,'' \emph{IEEE Robotics and Automation Letters}, 2023.

\bibitem{rana2023sayplan}
K.~Rana, J.~Haviland, S.~Garg, J.~Abou-Chakra, I.~Reid, and N.~Suenderhauf, ``Sayplan: Grounding large language models using 3d scene graphs for scalable robot task planning,'' in \emph{7th Annual Conference on Robot Learning}, 2023.

\bibitem{ravichandran2024spine}
Z.~Ravichandran, V.~Murali, M.~Tzes, G.~J. Pappas, and V.~Kumar, ``Spine: Online semantic planning for missions with incomplete natural language specifications in unstructured environments,'' \emph{arXiv preprint arXiv:2410.03035}, 2024.

\bibitem{mandi2023roco}
Z.~Mandi, S.~Jain, and S.~Song, ``Roco: Dialectic multi-robot collaboration with large language models,'' \emph{IEEE International Conference on Robotics and Automation (ICRA)}, 2024.

\bibitem{zhang2023building}
H.~Zhang, W.~Du, J.~Shan, Q.~Zhou, Y.~Du, J.~B. Tenenbaum, T.~Shu, and C.~Gan, ``Building cooperative embodied agents modularly with large language models,'' \emph{International Conference on Learning Representations (ICLR)}, 2024.

\bibitem{talebirad2023multi}
Y.~Talebirad and A.~Nadiri, ``Multi-agent collaboration: Harnessing the power of intelligent llm agents,'' \emph{arXiv preprint arXiv:2306.03314}, 2023.

\bibitem{liu2023bolaa}
Z.~Liu, W.~Yao, J.~Zhang, L.~Xue, S.~Heinecke, R.~Murthy, Y.~Feng, Z.~Chen, J.~C. Niebles, D.~Arpit \emph{et~al.}, ``Bolaa: Benchmarking and orchestrating llm-augmented autonomous agents,'' in \emph{ICLR Workshop on LLM Agents}, 2024.

\bibitem{hong2023metagpt}
S.~Hong, X.~Zheng, J.~Chen, Y.~Cheng, J.~Wang, C.~Zhang, Z.~Wang, S.~K.~S. Yau, Z.~Lin, L.~Zhou \emph{et~al.}, ``Metagpt: Meta programming for multi-agent collaborative framework,'' \emph{International Conference on Learning Representations}, 2024.

\bibitem{chen2023scalable}
Y.~Chen, J.~Arkin, Y.~Zhang, N.~Roy, and C.~Fan, ``Scalable multi-robot collaboration with large language models: Centralized or decentralized systems?'' \emph{International Conference on Robotics and Automation}, 2024.

\bibitem{zhang2023controlling}
B.~Zhang, H.~Mao, J.~Ruan, Y.~Wen, Y.~Li, S.~Zhang, Z.~Xu, D.~Li, Z.~Li, R.~Zhao \emph{et~al.}, ``Controlling large language model-based agents for large-scale decision-making: An actor-critic approach,'' in \emph{International Conference on Learning Representations}, 2024.

\bibitem{chen2024solving}
W.~Chen, S.~Koenig, and B.~Dilkina, ``Why solving multi-agent path finding with large language model has not succeeded yet,'' \emph{arXiv preprint arXiv:2401.03630}, 2024.

\bibitem{kannan2023smart}
S.~S. Kannan, V.~L. Venkatesh, and B.-C. Min, ``Smart-llm: Smart multi-agent robot task planning using large language models,'' \emph{arXiv preprint arXiv:2309.10062}, 2023.

\bibitem{liu2024leveraging}
X.~Liu, P.~Li, W.~Yang, D.~Guo, and H.~Liu, ``Leveraging large language model for heterogeneous ad hoc teamwork collaboration,'' in \emph{Robotics: Science and Systems}, 2024.

\bibitem{pallagani2024prospects}
V.~Pallagani, K.~Roy, B.~Muppasani, F.~Fabiano, A.~Loreggia, K.~Murugesan, B.~Srivastava, F.~Rossi, L.~Horesh, and A.~Sheth, ``On the prospects of incorporating large language models (llms) in automated planning and scheduling (aps),'' \emph{arXiv preprint arXiv:2401.02500}, 2024.

\bibitem{zeng2023large}
F.~Zeng, W.~Gan, Y.~Wang, N.~Liu, and P.~S. Yu, ``Large language models for robotics: A survey,'' \emph{arXiv preprint arXiv:2311.07226}, 2023.

\bibitem{hunt2024survey}
W.~Hunt, S.~D. Ramchurn, and M.~D. Soorati, ``A survey of language-based communication in robotics,'' \emph{arXiv preprint arXiv:2406.04086}, 2024.

\bibitem{balasubramanian2014conformal}
V.~Balasubramanian, S.-S. Ho, and V.~Vovk, \emph{Conformal prediction for reliable machine learning: theory, adaptations and applications}.\hskip 1em plus 0.5em minus 0.4em\relax Newnes, 2014.

\bibitem{angelopoulos2023conformal}
A.~N. Angelopoulos, S.~Bates \emph{et~al.}, ``Conformal prediction: A gentle introduction,'' \emph{Foundations and Trends{\textregistered} in Machine Learning}, vol.~16, no.~4, pp. 494--591, 2023.

\bibitem{shafer2008tutorial}
G.~Shafer and V.~Vovk, ``A tutorial on conformal prediction.'' \emph{Journal of Machine Learning Research}, vol.~9, no.~3, 2008.

\bibitem{kumar2023conformal}
B.~Kumar, C.~Lu, G.~Gupta, A.~Palepu, D.~Bellamy, R.~Raskar, and A.~Beam, ``Conformal prediction with large language models for multi-choice question answering,'' in \emph{ICML (Neural Conversational AI TEACH) workshop}, 2023.

\bibitem{ren2023robots}
A.~Z. Ren, A.~Dixit, A.~Bodrova, S.~Singh, S.~Tu, N.~Brown, P.~Xu, L.~Takayama, F.~Xia, J.~Varley, Z.~Xu, D.~Sadigh, A.~Zeng, and A.~Majumdar, ``Robots that ask for help: Uncertainty alignment for large language model planners,'' \emph{Conference on Robot Learning}, 2023.

\bibitem{wang2023conformal}
J.~Wang, J.~Tong, K.~Tan, Y.~Vorobeychik, and Y.~Kantaros, ``Conformal temporal logic planning using large language models,'' \emph{arXiv preprint arXiv:2309.10092}, 2023.

\bibitem{xiao2019quantifying}
Y.~Xiao and W.~Y. Wang, ``Quantifying uncertainties in natural language processing tasks,'' in \emph{Proceedings of the AAAI conference on artificial intelligence}, vol.~33, no.~01, 2019, pp. 7322--7329.

\bibitem{zhou2023navigating}
K.~Zhou, D.~Jurafsky, and T.~Hashimoto, ``Navigating the grey area: Expressions of overconfidence and uncertainty in language models,'' \emph{arXiv preprint arXiv:2302.13439}, 2023.

\bibitem{xiao2022uncertainty}
Y.~Xiao, P.~P. Liang, U.~Bhatt, W.~Neiswanger, R.~Salakhutdinov, and L.-P. Morency, ``Uncertainty quantification with pre-trained language models: A large-scale empirical analysis,'' \emph{arXiv preprint arXiv:2210.04714}, 2022.

\bibitem{angelopoulos2020uncertainty}
A.~Angelopoulos, S.~Bates, J.~Malik, and M.~I. Jordan, ``Uncertainty sets for image classifiers using conformal prediction,'' \emph{International Conference on Learning Representations (ICLR)}, 2021.

\bibitem{yang2023object}
H.~Yang and M.~Pavone, ``Object pose estimation with statistical guarantees: Conformal keypoint detection and geometric uncertainty propagation,'' in \emph{Proceedings of the IEEE/CVF Conference on Computer Vision and Pattern Recognition}, 2023, pp. 8947--8958.

\bibitem{mao2023safe}
Z.~Mao, C.~Sobolewski, and I.~Ruchkin, ``How safe am i given what i see? calibrated prediction of safety chances for image-controlled autonomy,'' in \emph{Proceedings of the 6th Annual Learning for Dynamics and Control Conference}, 2024.

\bibitem{su2024collaborative}
S.~Su, S.~Han, Y.~Li, Z.~Zhang, C.~Feng, C.~Ding, and F.~Miao, ``Collaborative multi-object tracking with conformal uncertainty propagation,'' \emph{IEEE Robotics and Automation Letters}, 2024.

\bibitem{lindemann2023safe}
L.~Lindemann, M.~Cleaveland, G.~Shim, and G.~J. Pappas, ``Safe planning in dynamic environments using conformal prediction,'' \emph{IEEE Robotics and Automation Letters}, 2023.

\bibitem{sun2023conformal}
J.~Sun, Y.~Jiang, J.~Qiu, P.~T. Nobel, M.~Kochenderfer, and M.~Schwager, ``Conformal prediction for uncertainty-aware planning with diffusion dynamics model,'' in \emph{Thirty-seventh Conference on Neural Information Processing Systems}, 2023.

\bibitem{cleaveland2023conformal}
M.~Cleaveland, I.~Lee, G.~J. Pappas, and L.~Lindemann, ``Conformal prediction regions for time series using linear complementarity programming,'' in \emph{Proceedings of the AAAI Conference on Artificial Intelligence}, vol.~38, no.~19, 2024, pp. 20\,984--20\,992.

\bibitem{lekeufack2023conformal}
J.~Lekeufack, A.~A. Angelopoulos, A.~Bajcsy, M.~I. Jordan, and J.~Malik, ``Conformal decision theory: Safe autonomous decisions from imperfect predictions,'' \emph{arXiv preprint arXiv:2310.05921}, 2023.

\bibitem{dixit2024perceive}
A.~Dixit, Z.~Mei, M.~Booker, M.~Storey-Matsutani, A.~Z. Ren, and A.~Majumdar, ``Perceive with confidence: Statistical safety assurances for navigation with learning-based perception,'' in \emph{8th Annual Conference on Robot Learning}, 2024.

\bibitem{manokhin_valery_2022_6467205}
\BIBentryALTinterwordspacing
V.~Manokhin, ``Awesome conformal prediction,'' Apr. 2022. [Online]. Available: \url{https://doi.org/10.5281/zenodo.6467205}
\BIBentrySTDinterwordspacing

\bibitem{schillinger2018decomposition}
P.~Schillinger, M.~B{\"u}rger, and D.~V. Dimarogonas, ``Decomposition of finite ltl specifications for efficient multi-agent planning,'' in \emph{Distributed Autonomous Robotic Systems}.\hskip 1em plus 0.5em minus 0.4em\relax Springer, 2018, pp. 253--267.

\bibitem{vovk2012conditional}
V.~Vovk, ``Conditional validity of inductive conformal predictors,'' in \emph{Asian conference on machine learning}.\hskip 1em plus 0.5em minus 0.4em\relax PMLR, 2012, pp. 475--490.

\bibitem{cauchois2024robust}
M.~Cauchois, S.~Gupta, A.~Ali, and J.~C. Duchi, ``Robust validation: Confident predictions even when distributions shift,'' \emph{Journal of the American Statistical Association}, pp. 1--66, 2024.

\bibitem{sadinle2019least}
M.~Sadinle, J.~Lei, and L.~Wasserman, ``Least ambiguous set-valued classifiers with bounded error levels,'' \emph{Journal of the American Statistical Association}, vol. 114, no. 525, pp. 223--234, 2019.

\bibitem{kolve2017ai2}
E.~Kolve, R.~Mottaghi, W.~Han, E.~VanderBilt, L.~Weihs, A.~Herrasti, M.~Deitke, K.~Ehsani, D.~Gordon, Y.~Zhu \emph{et~al.}, ``Ai2-thor: An interactive 3d environment for visual ai,'' \emph{arXiv preprint arXiv:1712.05474}, 2017.

\bibitem{DemoSafeMultiLLM}
Demonstrations, \texttt{\url{https://vimeo.com/962767783}}.

\end{thebibliography}


\begin{table*}[t]
\centering
\begin{tabular}{|c|ccc|ccc|ccc|}
\hline
\multirow{2}{*}{N} & \multicolumn{3}{c|}{GPT 3.5}                                             & \multicolumn{3}{c|}{Llama 3-8b}                                          & \multicolumn{3}{c|}{Llama 2-7b}                                    \\ \cline{2-10} 
                   & \multicolumn{1}{c|}{Ours}     & \multicolumn{1}{c|}{CMAS}     & DMAS     & \multicolumn{1}{c|}{Ours}     & \multicolumn{1}{c|}{CMAS}     & DMAS     & \multicolumn{1}{c|}{Ours}   & \multicolumn{1}{c|}{CMAS}   & DMAS   \\ \hline
1                  & \multicolumn{1}{c|}{$92.5\%$} & \multicolumn{1}{c|}{$37.2\%$} & $37.2\%$ & \multicolumn{1}{c|}{$70\%$}   & \multicolumn{1}{c|}{$28.5\%$} & $28.5\%$ & \multicolumn{1}{c|}{$52\%$} & \multicolumn{1}{c|}{$18\%$} & $18\%$ \\ \hline
3                  & \multicolumn{1}{c|}{$83.3\%$} & \multicolumn{1}{c|}{$30.3\%$} & $27.9\%$ & \multicolumn{1}{c|}{$55.6\%$} & \multicolumn{1}{c|}{$33\%$}   & $17\%$   & \multicolumn{1}{c|}{$20\%$} & \multicolumn{1}{c|}{$0\%$}  & $0\%$  \\ \hline
10                 & \multicolumn{1}{c|}{$95\%$}   & \multicolumn{1}{c|}{$60\%$}   & $5\%$    & \multicolumn{1}{c|}{$60\%$}   & \multicolumn{1}{c|}{$40\%$}   & $10\%$   & \multicolumn{1}{c|}{$0\%$}  & \multicolumn{1}{c|}{$0\%$}  & $0\%$  \\ \hline
15                 & \multicolumn{1}{c|}{$87.5\%$} & \multicolumn{1}{c|}{$25\%$}   & $0\%$    & \multicolumn{1}{c|}{$0\%$}    & \multicolumn{1}{c|}{$0\%$}    & $0\%$    & \multicolumn{1}{c|}{$0\%$}  & \multicolumn{1}{c|}{$0\%$}  & $0\%$  \\ \hline
\end{tabular}
\caption{\textcolor{black}{
Comparisons of planning accuracy of S-ATLAS (w/o help) against CMAS and DMAS, using GPT 3.5, Llama 2-7b, and Llama 3-8b.}}
\label{table:acc_no_help}
\end{table*}
\begin{appendices}
    \section{Comparisons Against Non-Conformalized LLM-based Planners}\label{app:Comp}

In this Appendix, we compare S-ATLAS against centralized and decentralized planners presented in \cite{chen2023scalable} which, however, do not employ CP and do not allow robots to ask for help. Our comparisons show that our proposed planner achieves better planning performance than these baselines planners (even when the help-request module is deactivated from our method). This performance gap becomes more apparent as the robot team size or the task complexity increases.

\subsection{Setting up Comparative Experiments}
\textbf{Baselines:} Particularly, we compare Alg. \ref{alg:MultiRobotLLM} against a recent decentralized (DMAS) and a centralized (CMAS) multi-robot planner proposed in \cite{chen2023scalable}. We select \cite{chen2023scalable} due to its token efficiency and scalability properties as shown in the empirical studies of that work. DMAS assigns an LLM agent to  each robot in the team. At every time step $t$, the LLM agents engage in dialogue rounds to pick their corresponding decisions $s_j(t)$. CMAS on the other hand considers a single LLM that is responsible for generating the multi-robot decision $\bbs(t)$. A key difference between S-ATLAS and these baselines is that the latter do not employ CP and do not allow robots to ask for help. Also, CMAS and DMAS do not employ the MCQA framework for action selection. 
To make comparisons fair we have applied all algorithms under the following settings: (i) All methods share the same prompt structure and select a decision from the same set $\ccalS$ constructed in Section \ref{sec:cpresult}. (ii) All methods terminate after a pre-defined mission horizon $H$. Cases where a planner fails to compute a plan within $H$ time steps are considered incorrect for all methods. (iii) We remove altogether the CP component from our planner since it does not exist in the baselines. The latter means that the robots never ask for help from human operators or their teammates; instead, S-ATLAS always picks the action $s_j(t)=\arg\max_{s\in\mathcal{S}}g(s|\ell_j(t))$ for all robots $j$ and time steps $t$. We emphasize that we have applied (iii) only to make comparisons fair against the baselines as otherwise our planner can outperform them by picking a low enough value for $\alpha$ and asking for help. This choice also allows us to assess the `nominal' performance of our planner when help from users is not available; evaluation of S-ATLAS without removing the CP-based help mode was presented in Sections \ref{sec:cpresult}-\ref{sec:knownoComp}.
%
%
%
%

\textbf{Evaluation Metrics}
Our main evaluation metrics include the planning accuracy, defined as the percentage of scenarios where a planner generates a feasible plan accomplishing the assigned task, length of plans, runtimes, and number of API calls; see also Remark \ref{rem:tokenEff2}.
To compute the planning accuracy, we manually check the correctness of the designed plans. 
\subsection{Evaluation}
We consider the $110$ scenarios $\xi$ used in Sec. \ref{sec:cpresult}. In what follows, we group these  scenarios based on the category (i.e., number of robots) they belong to and we report the performance of our planner and the baselines; see Table \ref{table:acc_no_help}.

\textbf{Case I:} We consider $27$ scenarios with single-robot tasks (i.e., $N=1$) defined over four sub-tasks and at most one  safety requirement. Each sub-task requires moving a specific semantic object to a desired location. 
The safety constraint requires the robot to avoid approaching or grabbing specific objects. For instance, a task may require the robot to move two tomatoes and one potato from the fridge to the kitchen counter (three sub-tasks), put the bread in the microwave (one sub-task) and to never grab the knife located on the kitchen counter (safety constraint). Other missions may offer to the robots multiple possible destinations for the objects.
We select as maximum horizon $H=15$ for all scenarios considered in this case study. \textcolor{black}{Notice that when $N=1$, CMAS and DMAS trivially share the same planning scheme.
Using GPT 3.5, the accuracy of our planner and CMAS/DMAS is $92.5\%$ and $37.2\%$, respectively. Using Llama 3-8b, the accuracy for our planner and CMAS/DMAS dropped to $70\%$ and $28.5\%$, respectively, while with Llama 2-7b, the accuracy for our planner and CMAS/DMAS further decreased to $52\%$ and $18\%$, respectively. This accuracy drop is expected as Llama 2-7b (7 billion parameters) and Llama 3-8b (8 billion parameters) are significantly smaller models compared to GPT-3.5 (175 billion parameters).}  
We remark that although $H=15$, the team manages to accomplish the task \textcolor{black}{in a shorter horizon defined as $\bar{H}$, such that $\bar{H}<H$. In this case, the action `stay idle' is selected by all robots for $t>\bar{H}$.}
The average horizon $\bar{H}$ for our planner, CMAS, and DMAS, across all successful scenarios, is $12$, $13$, and $13$, respectively, using GPT 3.5,  Llama 2-7b, and Llama 3-8b. 

\begin{table*}[t]
\centering
\begin{tabular}{|c|ccc|ccc|ccc|}
\hline
\multirow{2}{*}{N} & \multicolumn{3}{c|}{GPT 3.5}                                 & \multicolumn{3}{c|}{Llama 3-8b}                              & \multicolumn{3}{c|}{Llama 2-7b}                              \\ \cline{2-10} 
                   & \multicolumn{1}{c|}{Ours} & \multicolumn{1}{c|}{CMAS} & DMAS & \multicolumn{1}{c|}{Ours} & \multicolumn{1}{c|}{CMAS} & DMAS & \multicolumn{1}{c|}{Ours} & \multicolumn{1}{c|}{CMAS} & DMAS \\ \hline
1                  & \multicolumn{1}{c|}{0.15} & \multicolumn{1}{c|}{0.03} & 0.04 & \multicolumn{1}{c|}{3.8}  & \multicolumn{1}{c|}{1.0}  & 1.0  & \multicolumn{1}{c|}{2.2}  & \multicolumn{1}{c|}{0.4}  & 0.5  \\ \hline
3                  & \multicolumn{1}{c|}{0.32} & \multicolumn{1}{c|}{0.06} & 0.16 & \multicolumn{1}{c|}{4.46} & \multicolumn{1}{c|}{1.1}  & 0.9  & \multicolumn{1}{c|}{3.1}  & \multicolumn{1}{c|}{0.5}  & 0.9  \\ \hline
10                 & \multicolumn{1}{c|}{0.83} & \multicolumn{1}{c|}{0.17} & 0.24 & \multicolumn{1}{c|}{10.2}  & \multicolumn{1}{c|}{1.4}  & 1.5  & \multicolumn{1}{c|}{10}   & \multicolumn{1}{c|}{1.6}  & 1.7  \\ \hline
15                 & \multicolumn{1}{c|}{1.2}  & \multicolumn{1}{c|}{0.3}  & 0.5  & \multicolumn{1}{c|}{14.8} & \multicolumn{1}{c|}{1.5}  & 1.6  & \multicolumn{1}{c|}{14}   & \multicolumn{1}{c|}{1.5}  & 2    \\ \hline
\end{tabular}
\caption{\textcolor{black}{Comparisons of average runtimes (mins) required to generate the entire plan using GPT 3.5, Llama 2-7b, and Llama 3-8b.}}\vspace{-0.8cm}
\label{table:runtim_no_help}
\end{table*}

\textbf{Case II:} We consider $55$ scenarios with tasks $\phi$ defined over $N=3$ robots and four to eight sub-tasks and at most one safety constraint.  Notice that since the minimum number of sub-tasks in $\phi$ is larger than $N$, this means that at least one robot will have to eventually accomplish at least one sub-task. Recall from Section \ref{sec:problem} that these sub-tasks are not pre-assigned to the robots; instead, the LLM should compute both a feasible task assignment and a mission plan. The maximum horizon is $H=9$ in this case. \textcolor{black}{Using GPT 3.5, the accuracy of our planner, CMAS, and DMAS is $83.3\%$, $30.9\%$, and $27.3\%$, respectively. Using Llama 3-8b, the accuracy of our planner, CMAS, and DMAS dropped to $55.6\%$, $33\%$, and $17\%$, respectively. Using Llama 2-7b, the planning performance further decreased: our planner achieved $20\%$ accuracy, while both CMAS and DMAS failed to generate any correct plans.
%
} The average horizon $\bar{H}$ for all methods, across all successful scenarios, is $7$ using both GPT 3.5 and Llama 3-8b. Observe that unlike the baselines, the performance of our method has not dropped significantly compared to Case I (when coupled with GPT 3.5) despite the increase in the number of robots and the task complexity. 
%


\textbf{Case III:} Next, we consider scenarios over larger robot teams. Specifically, we consider $20$ scenarios associated with $N=10$ robots and missions $\phi$ involving four to eight sub-tasks and at most one safety constraint.  Notice here that the number of robots is larger than the maximum number of sub-tasks in $\phi$. This means that there should be robots with no tasks assigned by the LLM. The maximum horizon is $H=4$ in this case. The accuracy of our planner, CMAS, and DMAS is $95\%$, $60\%$, and $5\%$, respectively, \textcolor{black}{using GPT 3.5.} 
Despite the larger number of robots in this case study, compared to the previous one, our planner achieves high accuracy. Notably, the planning accuracy of our planner and CMAS has improved compared to Case Study II. We attribute this to the shorter horizon $\bar{H}$ required in this case study compared to the previous one. In fact, the average horizon $\bar{H}$ for all methods, across all successful scenarios, is $4$ time steps.  This shorter horizon is due to the fact that Case III involves a larger number of robots accomplishing the same number of sub-tasks as in Case II. \textcolor{black}{Using Llama 3-8b, our method, CMAS, and DMAS achieved $60\%$, $40\%$, and $10\%$ accuracy, respectively. All methods achieved $0\%$ accuracy when they are coupled with Llama 2-7b.}
%

\textbf{Case Study IV:} We consider $8$ scenarios that include $N=15$ robots and missions with ten sub-tasks and at most one safety constraint. We select $H=4$ as a maximum horizon. The accuracy of our planner, CMAS, and DMAS is $87.5\%$, $25\%$, and $0\%$, respectively. The average horizon $\bar{H}$ for all methods is $4$.
Observe that even though the horizons $\bar{H}$ in the previous and the current case study are similar, the performance of the baselines has dropped significantly possibly due to the larger number $N$ of robots as well as the larger number of associated sub-tasks. \textcolor{black}{All methods achieve $0\%$ accuracy when they are coupled with Llama 2-7b and Llama 3-8b.}

\begin{rem}[Computational Efficiency]\label{rem:tokenEff2}
\textcolor{black}{As demonstrated earlier, S-ATLAS achieves higher planning accuracy than the baselines even after deactivating the help mode. However, this increased accuracy comes at the cost of lower computational efficiency, indicated by a larger number of LLM queries, and increased runtimes for plan design; see Table \ref{table:runtim_no_help}. Specifically, while CMAS requires only one LLM query at each time $t$ to design a multi-robot action $\bbs(t)$ for all $N$ robots, our planner requires $N\cdot|\ccalS|$ queries (assuming $W=0$). On the other hand, DMAS requires at least $N$ LLM queries to design $\bbs(t)$. Thus, to design $\bbs(t)$ for scenarios from Case II, III, and IV, our method requires $84$, $280$, and $420$ LLM queries; DMAS requires on average $4$, $12$, and $15$ LLM queries, respectively. 
%
%
%
%
The runtimes of all case studies are demonstrated in Table \ref{table:runtim_no_help}. The total runtime to compute a plan depends on the total number of LLM queries, runtime per query  (determined by the number of tokens that the LLM has to process/generate), the horizon $H$, and the number $N$ of robots. 
\footnote{\textcolor{black}{Also, designing the local prediction set of robot $j$ at time $t$ requires obtaining the confidence scores for all possible $|\ccalS|$ decisions. In our implementation of S-ATLAS, obtaining these scores requires $|\ccalS|$ LLM queries which can occur in parallel since these queries are independent of one another. In our implementation, we query GPT 3.5 in parallel and the Llama models sequentially. Querying the Llama models in parallel was not possible due to excessive local computational requirements. This also resulted in longer runtimes for plan synthesis compared to when GPT 3.5 is employed as shown in Table \ref{table:runtim_no_help}. Note that Llama models are run on our local machines, whereas GPT-3.5 is accessed via OpenAI’s servers.}} 
Finally, observe that CMAS and DMAS have comparable runtimes despite their different number of LLM queries. This is because each query for CMAS takes less time than for DMAS. 
%
%
%
}
\end{rem}

\textbf{Summary of Comparisons:} The above empirical results show that the performance gap, in terms of planning accuracy, between S-ATLAS and the baselines increases significantly as the scenario complexity increases, where the latter is determined primarily by the horizon $\bar{H}$, the robot team size $N$, and the number of sub-tasks in $\phi$. Specifically, as $\bar{H}$ increases (e.g., due to a low ratio of the number of robots over the number of sub-tasks), the performance of the baselines drops; see e.g., Case I and II. Also, for scenarios associated with similar values for $\bar{H}$, the performance of the baselines tends to drop as the number of robots and sub-tasks increase; see Case III-IV. 
The accuracy of S-ATLAS seems to be more robust to such variations. We attribute this empirical performance improvement to the MCQA framework, as it attempts to eliminate hallucinations. To the contrary, CMAS and DMAS require the employed LLMs to generate new tokens to design multi-robot plans increasing the risk of hallucinations. Notably, CMAS achieves better planning accuracy than DMAS which is also consistent with the results presented in \cite{chen2023scalable}. \textcolor{black}{We note again that Alg. \ref{alg:MultiRobotLLM} can achieve desired mission success rates (see Section \ref{sec:cpresult}) when coupled with CP, a capability that is missing from existing LLM-based multi-robot planners including the inspiring work in \cite{chen2023scalable}; see Sections \ref{sec:cpresult}-\ref{sec:knownoComp}.}
    %

\begin{rem}[Hybrid Planners]
    The work in \cite{chen2023scalable} proposes also hybrid frameworks as extensions of CMAS and DMAS. Specifically, once CMAS generates a plan, LLMs are employed to detect and resolve conflicts in a centralized or decentralized way. These conflicts include robot collisions that may occur during the plan execution, as well as the assignment of actions that do not belong to the action set. Our planner prevents the latter from happening due to MCQA framework. Although our proposed algorithm can also be integrated with similar frameworks to prevent collisions during the plan execution, this aspect is beyond the scope of this work. 
\end{rem}

\section{\textcolor{black}{Conformal Prediction \\ with Multiple Acceptable Decisions}}\label{app:proof}

Our conformal prediction analysis to design prediction sets, provided in Section \ref{sec:cp}, as well as our mission completion rate guarantees (Theorem \ref{thm1}) can be extended to cases where multiple feasible solutions exist. As discussed in Remark \ref{rem:multipleFeas1}, the key difference lies in the construction of the calibration dataset. For each calibration mission scenario, among all feasible plans, we select the one constructed by picking at each planning step the decision with the highest LLM confidence score (among all other feasible decisions that contribute to mission progress) while also following an ordered set $\ccalI$. CP can then be applied as usual, yielding prediction sets that contain, with user-specified probability, the plan consisting of the feasible decisions with the highest LLM confidence scores among all feasible plans. In what follows, we provide a more formal description of this approach, which is adapted from Appendices A3-A4 of \cite{ren2023robots}. We first present the analysis for single-robot, single-step plans and then generalize it to multi-robot, multi-step plans.

\underline{Single-robot \& Single-step plan}: Initially, we focus on scenarios with $N=1$ and $H=1$. 
%
First, we sample $M$ independent scenarios from $\ccalD$. We refer to these scenarios as calibration scenarios. For each calibration scenario $i\in\{1,\dots,M\}$, we construct its equivalent prompt $\ell_{j,\text{calib}}^i$ associated with (the single) robot $j$. For each prompt, we assume a set of $R_i\geq 1$ feasible plans collected in the set $\mathcal{T}^i_{j,\text{calib}} = \{\tau^i_{r, \text{calib}} =s_{j,r,\text{calib}}^{i} \}_{r=1}^{R_i}$. This way we construct a calibration dataset $\ccalM=\{(\ell_{j,\text{calib}}^i,\mathcal{T}_{j,\text{calib}}^{i})\}_{i=1}^M$. Consider a function $F$ which, given a prompt $\ell$ and a set $\mathcal{T}$ of feasible decisions, returns a single feasible decision $s\in\ccalT$, i.e., $F(\ell, \mathcal{T}) = s$. In our implementation, we define $F$ so that, given $\ell_{j,\text{calib}}^i$, it  returns the decision $s\in\mathcal{T}^i_{j,\text{calib}}$, among all other ones in $\mathcal{T}^i_{j,\text{calib}} $, that has the largest LLM confidence score $g(s|\ell_{j,\text{calib}}^i)$.

%
We apply $F$ in the support of our distribution $D$ inducing a distribution $\ccalD'$. The induced calibration dataset is also defined as $\ccalM'=\{(\ell_{j,\text{calib}}^i,\hat{s}_{j,\text{calib}}^{i})\}_{i=1}^M$, where $\hat{s}_{j,\text{calib}}^{i} = F(\ell_{j,\text{calib}}^i, \mathcal{T}_{j,\text{calib}}^{i})$. Then we apply conformal prediction as usual using the calibration dataset $\mathcal{M}'$ by computing the quantile $q=\frac{(M+1)(1-\alpha)}{M}$. 

Consider an unseen validation/test scenario drawn from $\ccalD$. We convert this scenario into its equivalent prompt $\ell_{j,\text{test}}$. CP then generates a prediction set $\mathcal{C}(\ell_{j,\text{test}})$ using $q$:
\begin{equation}\label{eq:cov1}
\mathcal{C}(\ell_{j,\text{test}}) = \{ s_j | g(s_j | \ell_{j,\text{test}}) > 1-q \},
\end{equation}
This prediction set is supported by the following marginal guarantees:
\begin{equation}
P(\hat{s}_{j,\text{test}}\in \mathcal{C}(\ell_{j,\text{test}})) = P(F(\ell_{j,\text{test}}, \mathcal{T}_{j, \text{test}}) \in \mathcal{C}(\ell_{j,\text{test}})) \geq 1-\alpha,
\end{equation}
for some $\alpha\in(0,1)$, only if the test mission scenario $(\ell_{j,\text{test}},\ccalT_{j,\text{test}})$ sampled from $\ccalD$ is i.i.d., with the calibration data sampled from $\ccalD'$. This is true
since functions of independent random variables are independent, and functions of identically distributed
random variables are identically distributed if the functions are measurable. Since $F$ is measurable, the coverage guarantee in \eqref{eq:cov1} holds. 

\underline{Multi-robot \& Multi-step plan:} Next, we consider the case where $N\geq 1$ and $H\geq 1$. We sample $M\geq 1$ independent calibration scenarios from $\mathcal{D}$. Each scenario will have a sequence of $T_i=H_i \cdot N_i \geq 1$ prompts denoted by 
\begin{align}\label{eq:seqProm}
&\bar{\ell}_{\text{calib}}^i=\underbrace{\ell_{1,\text{calib}}^i(1),\dots,\ell_{j,\text{calib}}^i(1),\dots,\ell_{N_i,\text{calib}}^i(1)}_{t=1},\dots,\nonumber\\&
\underbrace{\ell_{1,\text{calib}}^i(t'),\dots,\ell_{j,\text{calib}}^i(t'),\dots,\ell_{N_i,\text{calib}}^i(t')}_{t=t'},\dots,\nonumber\\&
\underbrace{\ell_{1,\text{calib}}^i(H_i),\dots,\ell_{j,\text{calib}}^i(H_i),\dots,\ell_{N_i,\text{calib}}^i(H_i)}_{t=H_i}.
\end{align}

And each sequence will have a set of sequences of true labels $\bar{\mathcal{T}}_{\text{calib}}^i = \{\tau^{i,r}_{\text{calib}} \}_{r=1}^{R_i}$, where

\begin{align}\label{eq:Plans3}
&\tau_{r,\text{calib}}^{i}=\underbrace{s^i_{1,r,\text{calib}}(1),\dots,s^i_{j,r,\text{calib}}(1),\dots,s^i_{N_i,r,\text{calib}}(1)}_{=\bbs^r_{\text{calib}}(1)},\dots,\nonumber\\&
\underbrace{s^i_{1,r,\text{calib}}(t'),\dots,s^i_{j,r,\text{calib}}(t'),\dots,s^i_{N_i,r,\text{calib}}(t')}_{=\bbs^i_{r,\text{calib}}(t')},\dots,\nonumber\\&
\underbrace{s^i_{1,r,\text{calib}}(H_i),\dots,s^i_{j,r,\text{calib}}(H_i),\dots,s^i_{N_i,r,\text{calib}}(H_i)}_{=\bbs^i_{r,\text{calib}}(H_i)}.
\end{align}
representing a correct plan that can satisfy the task. This gives rise to a calibration dataset $\mathcal{M} = \{(\bar{\ell}_{\text{calib}}^i, \bar{\mathcal{T}}_{\text{calib}}^i) \}_{i=1}^M$. Here for simplicity, we denote by $\bar{\ell}_{\text{calib}}^i(k)$ and $\tau_{r,\text{calib}}^{i}(k)$ the $k$-th entry in $\bar{\ell}_{\text{calib}}^i$ and $\tau_{r,\text{calib}}^{i}$, respectively, where $k\in\{1,\dots,T_i\}$. 

Note that unlike the `single-robot \& single-step plan' here we cannot apply $F$ to the set of true plans $\bar{\mathcal{T}}_{\text{calib}}^i$ in each step since the correct action at time step $k$ depends on the sequence of previously chosen correct actions. 

Let $\bar{\mathcal{T}}^i_{k, \text{calib}}$ denote the set of feasible/correct options at time step $k$ given an initial prompt $\bar{\ell}_{\text{calib}}^i(1)$ and the sequence  of previously chosen feasible actions until step $k-1$. 
%
Then, we auto-regressively define the following sequence:

\begin{equation}
\bar{F}_1(\bar{\ell}_{\text{calib}}^i, \bar{\mathcal{T}}_{\text{calib}}^i)= F(\bar{\ell}_{\text{calib}}^i(1), \bar{\mathcal{T}}_{1,\text{calib}}^i) 
%
\end{equation}
%
and
\begin{equation}
\bar{F}_{k}(\bar{\ell}_{\text{calib}}^i, \bar{\mathcal{T}}_{\text{calib}}^i)= \bar{F}_{k-1} (\bar{\ell}_{\text{calib}}^i, \bar{\mathcal{T}}_{\text{calib}}^i) \bigcup F (\bar{\ell}_{\text{calib}}^i, \bar{\mathcal{T}}_{k,\text{calib}}^i),
\end{equation}
for all $k\in\{1,\dots,T_i\}$. Intuitively, $\bar{\mathcal{T}}_{\text{calib}}^i$ collects all possible plans that can be designed by selecting a feasible action at each step $k$. On the other hand, $\bar{F}_{k}(\bar{\ell}_{\text{calib}}^i, \bar{\mathcal{T}}_{\text{calib}}^i)$ determines the plan constructed by always selecting the action with the highest LLM confidence score among all feasible actions.

By applying $\bar{F}_{T_i}$ to the sequence $\ccalD$ and the calibration dataset, we get the induced distribution $\ccalD'$ and the dataset $\ccalM'= \{ (\bar{\ell}^i_{\text{calib}}, \hat{\tau}^i_{\text{calib}}) \}_{i=1}^M$, where $\hat{\tau}^i_{\text{calib}}=\bar{F}_{T_i}(\bar{\ell}_{\text{calib}}^i, \bar{\mathcal{T}}_{\text{calib}}^i)$. Then, we apply conformal prediction as usual by computing the quantile $\bar{q}$ using the calibration dataset $\ccalM'$. 

Consider a new test scenario $(\bar{\ell}_{\text{test}},\bar{\mathcal{T}}_{\text{test}})$ drawn from $\mathcal{D}$ that is defined over $N_{\text{test}}$ robots and horizon $H_{\text{test}}$, where $\bar{\ell}_{\text{test}}$ corresponds to a sequence of prompts
$\bar{\ell}_{\text{test}}=\bar{\ell}_{\text{test}}(1),\dots,\bar{\ell}_{\text{test}}(k),\dots,\bar{\ell}_{\text{test}}(T_{\text{test}})$,
with $T_{\text{test}}=H_\text{test}\cdot N_{\text{test}}$, and $\bar{\mathcal{T}}_{\text{test}}$ is the set of all feasible plans. \textcolor{black}{For the test sequence $\bar{\ell}_{\text{test}}$, we obtain $\hat{\tau}_{\text{test}} = \bar{F}_{T_{\text{test}}}(\bar{\ell}_{\text{test}}, \bar{\mathcal{T}}_{\text{test}})$.}

Using the calibration dataset $\ccalM'$ and the corresponding quantile $\bar{q}$, we construct the following prediction set:
\begin{equation}\label{eq:pred3}
\bar{\mathcal{C}}(\bar{\ell}_{\text{test}})=\{\tau~|~\bar{g}(\tau|\bar{\ell}_{\text{test}})>1-\bar{q}\},
\end{equation}

This set satisfies the following coverage guarantee,
\begin{equation}\label{eq:cov2}
P(\bar{F}_{T_{\text{test}}}(\bar{\ell}_{\text{test}}, \bar{\mathcal{T}}_{\text{test}}) \in \bar{\mathcal{C}}(\bar{\ell}_{\text{test}})) \geq 1-\alpha,
\end{equation}
only if  $(\bar{\ell}_{\text{test}},\bar{\mathcal{T}}_{\text{test}})$  sampled from $\ccalD$ is i.i.d., with the calibration data $\ccalM'$ sampled from $\ccalD'$; in words, \eqref{eq:cov2} means that the set $\bar{\mathcal{C}}(\bar{\ell}_{\text{test}})$ contains the plan $\bar{F}_{T_{\text{test}}}(\bar{\ell}_{\text{test}}, \bar{\mathcal{T}}_{\text{test}})$ with probability at least equal to $1-\alpha$. This i.i.d. requirement is satisfied using the same argument used for single-robot \& single-step plans. We note that the prediction set in \eqref{eq:pred3} can be constructed causally using exactly the same steps presented in Section \ref{sec:cp} as that analysis does not rely uniqueness of feasible solutions.
\end{appendices}

\end{document}